\documentclass{article} 
\usepackage{iclr2025_conference,times}


\usepackage{amsmath,amsfonts,bm}









\def\eqref#1{equation~\ref{#1}}









\def\1{\bm{1}}








\def\vv{{\bm{v}}}



\DeclareMathAlphabet{\mathsfit}{\encodingdefault}{\sfdefault}{m}{sl}
\SetMathAlphabet{\mathsfit}{bold}{\encodingdefault}{\sfdefault}{bx}{n}













\DeclareMathOperator*{\argmin}{arg\,min}

\usepackage{hyperref}
\usepackage{url}
\usepackage{algpseudocode}
\usepackage{algorithm}
\usepackage{subcaption}
\usepackage{mathtools}
\usepackage{multirow}
\usepackage{graphicx}
\usepackage{booktabs}
\usepackage{amsmath}
\usepackage{wrapfig}
\usepackage{pifont}
\usepackage{paralist}
\usepackage{bm}

\newcommand{\cmark}{\ding{51}}%
\newcommand{\xmark}{\ding{55}}%

\usepackage{orcidlink}
\usepackage{color, colortbl} 
\usepackage{rotating}
\definecolor{Gray}{gray}{0.9}

\newcommand{\iisb}{I{\textsuperscript{2}}SB}
\newcommand{\EE}{\mathbb{E}}
\newcommand{\LL}{\mathcal{L}}
\newcommand{\HH}{\bm{H}}
\newcommand{\xx}{\bm{x}}
\newcommand{\yy}{\bm{y}}

\newcommand{\ww}{\bm{w}}

\DeclareMathOperator{\LPIPS}{LPIPS}
\DeclareMathOperator{\GCTM}{GCTM}
\DeclareMathOperator{\CTM}{CTM}
\DeclareMathOperator{\EMA}{EMA}
\DeclareMathOperator{\DSM}{DSM}

\DeclareMathOperator{\FM}{FM}
\DeclareMathOperator{\OT}{OT}
\DeclareMathOperator{\sg}{sg}

\def\eqref#1{(\ref{#1})}
\usepackage{amsthm}

\newtheorem{theorem}{Theorem}
\newcommand{\eg}{\emph{e.g.}}
\newcommand{\ie}{\emph{i.e.}}

\title{Generalized Consistency Trajectory Models for Image Manipulation}


\author{Beomsu Kim\textsuperscript{*} \\ KAIST
\And Jaemin Kim\thanks{Equal Contribution} \\ KAIST
\And Jeongsol Kim \\ KAIST
\And Jong Chul Ye \\ KAIST
}

%

\iclrfinalcopy 
\begin{document}

\maketitle

\vspace{-3mm}
\begin{abstract}
\vspace{-3mm}
Diffusion models (DMs) excel in unconditional generation, as well as on applications such as image editing and restoration. The success of DMs lies in the iterative nature of diffusion: diffusion breaks down the complex process of mapping noise to data into a sequence of simple denoising tasks. Moreover, we are able to exert fine-grained control over the generation process by injecting guidance terms into each denoising step. However, the iterative process is also computationally intensive, often taking from tens up to thousands of function evaluations. Although consistency trajectory models (CTMs) enable traversal between any time points along the probability flow ODE (PFODE) and score inference with a single function evaluation, CTMs only allow translation from Gaussian noise to data. This work aims to unlock the full potential of CTMs by proposing generalized CTMs (GCTMs), which translate between arbitrary distributions via ODEs. We discuss the design space of GCTMs and demonstrate their efficacy in various image manipulation tasks such as image-to-image translation, restoration, and editing.
\end{abstract}

\vspace{-6mm}
\section{Introduction}

Diffusion-based generative models (DMs) learn the scores of noise-perturbed data distributions, which can be used to translate samples between two distributions by numerically integrating an SDE or a probability flow ODE (PFODE) \citep{ddpm, dhariwal2021diffusion, song2020score}. They have achieved remarkable progress over recent years, even surpassing well-known generative models such as Generative Adversarial Networks (GANs) \citep{gan} or Variational Autoencoders (VAEs) \citep{vae} in terms of sample quality. Moreover, diffusion models have found wide application in areas such as image-to-image translation \citep{saharia2022palette}, image restoration \citep{chung2022dps, cddb}, image editing \citep{meng2021sdedit}, etc.

The success of DMs can largely be attributed to the iterative nature of diffusion, arising from its foundation on differential equations -- multi-step generation grants high-quality image synthesis by breaking down the complex process of mapping noise to data into a composition of simple denoising steps. We are also able to exert fine-grained control over the generation process by injecting minute guidance terms into each step \citep{chung2022dps, ho2022classifier}. Indeed, guidance is an underlying principle behind numerous diffusion-based image editing and restoration algorithms.

However, its iterative nature is also a curse, as diffusion inference often demands from tens to thousands of number of neural function evaluations (NFEs) per sample, rendering practical usage difficult. Consequently, there is now a large body of works on improving the inference speed of DMs. Among them, distillation refers to methods which train a neural network to translate samples along PFODE trajectories generated by a pre-trained teacher DM in one or two NFEs. Representative distillation methods include progressive distillation (PD) \citep{salimans2022progressive}, consistency models (CMs) \citep{song2023consistency}, and consistency trajectory models (CTMs) \citep{kim2023consistency}.

In contrast to PD or CMs which only allow traversal to the terminal point of the PFODE, CTMs enable traversal between any pair of time points along the PFODE as well as score inference, all in a single inference step. Thus, in theory, CTMs are more amenable to guidance, and are applicable to a wider variety of downstream image manipulation tasks. Yet, there is a lack of works exploring the effectiveness of CTMs in such context.

In this work, we take a step towards unlocking the full potential of CTMs. To this end, we first propose generalized CTMs (GCTMs) which generalize the theoretical framework behind CTMs with Flow Matching \citep{yaron2023flow} to enable translation between {\em two arbitrary distributions}. Next, we discuss the design space of GCTMs, and how each design choice influences the downstream task performance. Finally, we demonstrate the power of GCTMs on a variety of image manipulation tasks. Specifically, our contributions can be summarized as follows.
\begin{itemize}
    \item \textbf{Generalization of theory.} We propose GCTMs, which uses conditional flow matching theory to enable one-step translation between two arbitrary distributions (Theorem \ref{theorem:1}). This stands in contrast to CTMs, which is only able to learn PFODEs from Gaussian to data. In fact, we prove CTM is a special case of GCTM when one side is Gaussian (Theorem \ref{theorem:2}).
    \item \textbf{Elucidation of design space.} We clarify the design components of GCTMs, and explain how each component affects downstream task performance (Section \ref{sec:design}). In particular, flexible choice of couplings enable GCTM training in both unsupervised and supervised settings, allowing us to accelerate zero-shot and supervised image manipulation algorithms.
    \item \textbf{Empirical verification.} We demonstrate the potential of GCTMs on unconditional generation, image-to-image translation, image restoration, image editing, and latent manipulation. We show that GCTMs achieve competitive performance even with NFE = 1.
\end{itemize}

\begin{figure}[t]
\centering
\includegraphics[width=0.9\textwidth]{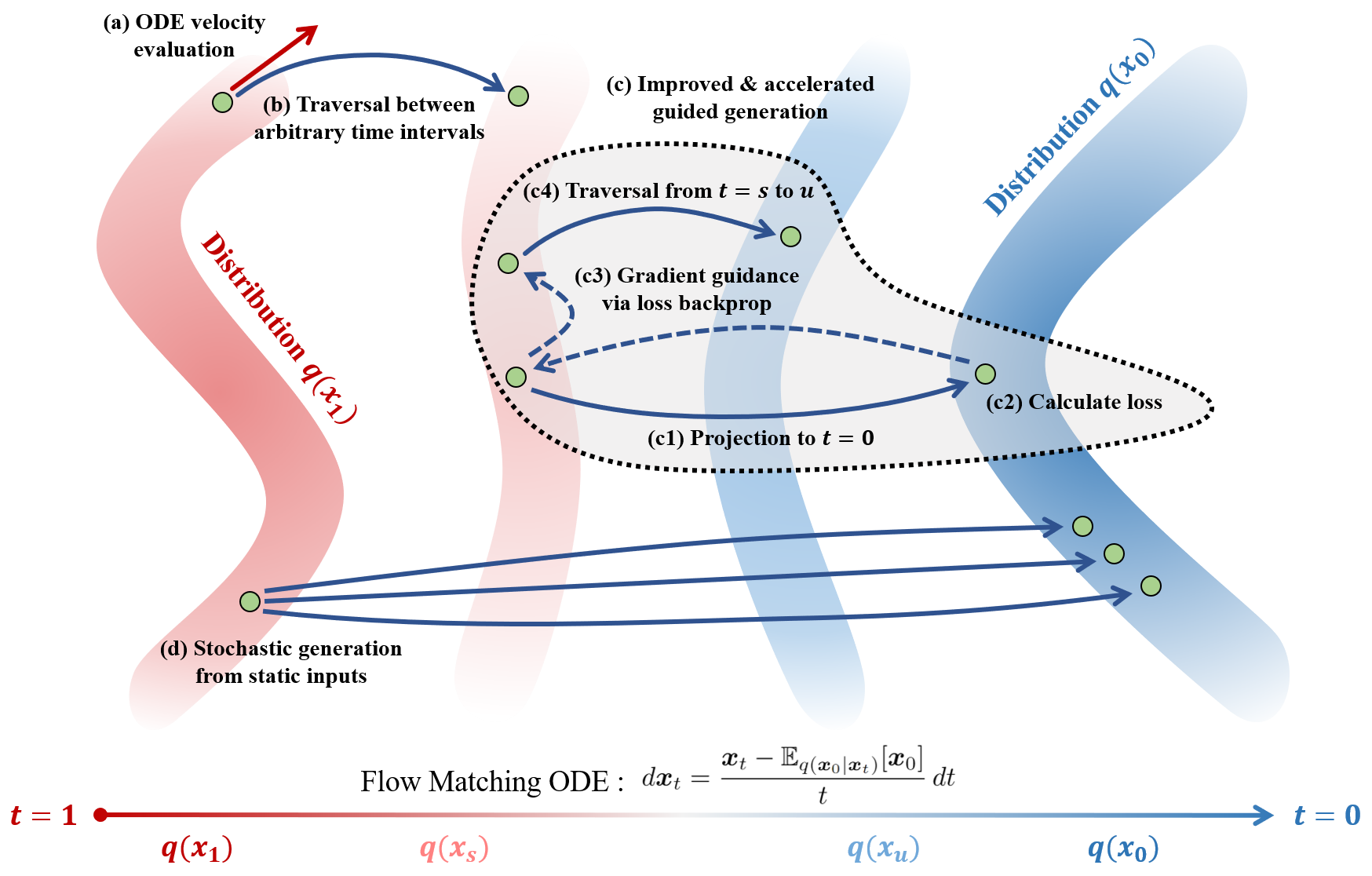}
\caption{\textbf{An illustration of GCTM and its applications -- solid arrows can be implemented by a single forward pass of the GCTM network.} GCTMs learn to traverse the Flow Matching ODE which is capable of {\em interpolating two arbitrary distributions $q(\xx_0)$ and $q(\xx_1)$}. GCTMs allow \textbf{(a)} one-step inference of ODE velocity, \textbf{(b)} one-step traversal between arbitrary time intervals of the ODE, \textbf{(c)} improved gradient-guidance by using exact posterior sample instead of posterior mean, and \textbf{(d)} one-step generation of varying outputs $\xx_0$ given a fixed input $\xx_1$.}
\label{fig:main}
\vspace{-0.5cm}
\end{figure}

\section{Related Work}

\textbf{Diffusion model distillation.} Despite the success of diffusion models (DMs) in generation tasks, DMs require large number of function evaluations (NFEs). As a way to improve the inference speed, the distillation method is proposed to predict the previously trained teacher DM’s output, \eg, score function. Progressive distillation (PD) \citep{salimans2022progressive} progressively reduces the NFEs by training the student model to learn predictions corresponding to two-steps of the teacher model's deterministic sampling path. Consistency models (CMs) \citep{song2023consistency} perform distillation by reducing the self-consistency function over the generative ODE. The above methodologies only consider the output of the ODE path. In contrast, consistency trajectory model (CTM) \citep{kim2023consistency} simultaneously learns the integral and infinitesimal changes of the PFODE trajectory. Our paper extends CTM to learn the PFODE trajectory between two arbitrary distributions.

\noindent
\textbf{Zero-shot image restoration via diffusion.}
Image restoration such as super-resolution, deblurring, and inpainting can be formulated as inverse problems, which obtain true signals from given observations. With the advancements in DMs serving as powerful priors, diffusion based inverse solvers have been explored actively. As a pioneering work, DDRM \citep{kawar2022denoising} performs denoising steps on the spectral space of a linear corrupting matrix. DPS \citep{chung2022dps} and $\mathrm\Pi$GDM \citep{song2022pseudoinverse} propose posterior sampling by estimating the likelihood distribution through Jensen's approximation and Gaussian assumption, respectively. While diffusion-based inverse solvers facilitate zero-shot image restoration, they often necessitate prolonged sampling times.

\noindent
\textbf{Image translation via diffusion.} Conditional GAN-based Pix2Pix \citep{isola2017pix2pix} specifies the task of translating one image into another image as image-to-image translation. SDEdit \citep{meng2021sdedit} avoids mode collapse and learning instabilities with GANs by utilizing DMs to translate edited images along SDEs. Palette \citep{saharia2022palette} proposed conditional DMs for image-to-image translation tasks. To address the Gaussian prior constraint with DMs, Schr\"{o}dinger bridge (SB) or direct diffusion bridge (DDB) methods have been proposed to learn SDEs between arbitrary two distributions \citep{liu20232i2sb,kim2023unsb,indi,cddb}. However, models that follow SDEs often require large NFEs. In contrast, our model learns ODE paths between two arbitrary distributions and demonstrates competitive results with NFE = 1.

\section{Background}

\subsection{Diffusion Models}

Diffusion models learn to reverse the process of corrupting data into Gaussian noise. Formally, the corruption process can be described by a forward SDE
\begin{align}
d\xx_t = \sqrt{2 t} \, d \ww_t \label{eq:forward_sde}
\end{align}
defined on the time interval $t \in (0,T)$. Given $\xx_0$ distributed according to a data distribution $p(\xx_0)$, \eqref{eq:forward_sde} sends $\xx_0$ to Gaussian noise as $t$ increases from $0$ to $T$. The reverse of the corruption process can be described by the reverse SDE
\begin{align}
d\xx_t = -2t \nabla \log p(\xx_t) \, dt + \sqrt{2t} \, d\bar{\ww}_t \label{eq:reverse_sde}
\end{align}
or its deterministic counterpart, the probability flow ODE (PFODE)
\begin{align}
d\xx_t = - t \nabla \log p(\xx_t) \, dt = t^{-1}(\xx_t - \EE_{p(\xx_0 | \xx_t)}[\xx_0]) \, dt \label{eq:pf_ode}
\end{align}
where $p(\xx_t)$ is the distribution of $\xx_t$ following \eqref{eq:forward_sde}, and $\bar{\ww}_t$ is the standard Wiener process in reverse-time. Given a noise sample $\xx_T \sim p(\xx_T)$, $\xx_t$ following \eqref{eq:reverse_sde} or \eqref{eq:pf_ode} is distributed according to $p(\xx_t)$ as $t$ decreases from $T$ to $0$. Thus, diffusion models are able to generate data from noise by approximating the scores $\nabla \log p(\xx_t)$ via score matching, and then numerically integrating \eqref{eq:reverse_sde} or \eqref{eq:pf_ode}.

\subsection{Consistency Trajectory Models (CTMs)}

CTMs learn to translate samples between arbitrary time points of PFODE trajectories, \ie, the goal of CTMs is to learn the integral of the PFODE
\begin{align}
\textstyle G(\xx_t,t,s) \coloneqq \xx_t + \int_t^s u^{-1}(\xx_u - \EE_{p(\xx_0 | \xx_u)}[\xx_0]) \, du \label{eq:G}
\end{align}
for $s \leq t$, where the terminal distribution $p(\xx_T)$ is assumed to be Gaussian. The parametrization
\begin{align}
\textstyle G(\xx_t,t,s) = \frac{s}{t} \xx_t + \left(1 - \frac{s}{t}\right) g(\xx_t,t,s)  \label{eq:G_param}
\end{align}
where
\begin{align}
\textstyle g(\xx_t,t,s) = \xx_t + \frac{t}{t - s} \int_t^s u^{-1}(\xx_u - \EE_{p(\xx_0 | \xx_u)}[\xx_0]) \, du \label{eq:g}
\end{align}
enables both traversal along the PFODE as well as score inference, since
\begin{align}
\lim_{s \rightarrow t} g(\xx_t,t,s) = \EE_{p(\xx_0 | \xx_t)}[\xx_0] \label{eq:g_limit}
\end{align}
so we may define $g(\xx_t,t,t) \coloneqq \EE_{p(\xx_0 | \xx_t)}[\xx_0]$.

Given a pre-trained DM, CTMs approximate $g$ with a neural net $g_\theta$ by simultaneously minimizing a distillation loss and a denoising score matching (DSM) loss. The distillation loss is
\begin{align}
\LL_{\CTM}(\theta) \coloneqq \EE_{0 \leq s \leq u < t \leq T} \EE_{p(\xx_t)} \left[ d\left(G_\theta(\xx_t,t,s), G_{\sg(\theta)}(\xx_{t \rightarrow u},u,s)\right) \right] \label{eq:ctm_loss}
\end{align}
where $G_\theta$ is \eqref{eq:G_param} with $g_\theta$ in place of $g$, $d(\cdot,\cdot)$ is a measure of similarity between inputs, $\sg$ is the stop-gradient operation, and $\xx_{t \rightarrow u}$ is defined to be the integral of PFODE from time $t$ to $u$ starting from $\xx_t$ using score estimates from the pre-trained diffusion model. Minimization of \eqref{eq:ctm_loss} causes $G_\theta$ to adhere to PFODE trajectories generated by the pre-trained diffusion model. The DSM loss is
\begin{align}
\LL_{\DSM}(\theta) \coloneqq \EE_{0 \leq \hat{t} \leq T} \EE_{p(\xx_0) p(\xx_1)} \EE_{p(\xx_{\hat{t}} | \xx_0, \xx_T)} \left[ \|\xx_0 - g_\theta(\xx_{\hat{t}},\hat{t},\hat{t})\|_2^2 \right] \label{eq:dsm_loss}
\end{align}
where $p(\xx_t | \xx_0, \xx_T) = \delta_{\xx_0 + t \xx_T}(\xx_t)$, and minimization of \eqref{eq:dsm_loss} causes $g_\theta$ to satisfy \eqref{eq:g_limit}. This loss acts as a regularization which improves score accuracy, and is crucial for sampling with large NFEs. Thus, the final objective is
\begin{align}
\LL_{\CTM}(\theta) + \lambda_{\DSM} \LL_{\DSM}(\theta),
\end{align}
and it is possible to further improve sample quality by adding a GAN loss.

\subsection{Flow Matching (FM)}

Flow Matching is another technique for learning PFODEs between two distributions $q(\xx_0)$ and $q(\xx_1)$. Specifically, let $q(\xx_0,\xx_1)$ be a joint distribution of $\xx_0$ and $\xx_1$. Define
\begin{align}
q(\xx_t | \xx_0, \xx_1) \coloneqq \delta_{(1 - t) \xx_0 + t \xx_1}(\xx_t), \quad q(\xx_t) \coloneqq \EE_{q(\xx_0,\xx_1)}[q(\xx_t | \xx_0, \xx_1)]
\end{align}
where $t \in (0,1)$ and $\delta_{\yy}(\cdot)$ is a Dirac delta at $\yy$. Then the ODE given by
\begin{align}
d\xx_t = \EE_{q(\xx_0, \xx_1 | \xx_t)}[\xx_1 - \xx_0] \, dt \label{eq:flow_ode}
\end{align}
generates the probability path $q(\xx_t)$, \ie, with terminal condition $\xx_1 \sim q(\xx_1)$, $\xx_t$ following \eqref{eq:flow_ode} is distributed according to $q(\xx_t)$. Analogous to denoising score matching, the velocity term in \eqref{eq:flow_ode} can be approximated by a neural network $\vv_\phi$ which solves a regression problem
\begin{align}
\min_{\phi} \EE_{q(\xx_0,\xx_1,\xx_t)} \left[ \| (\xx_1 - \xx_0) - \vv_\phi(\xx_t,t) \|_2^2 \right].
\end{align}
Unlike diffusion whose terminal distribution $p(\xx_T)$ is Gaussian, $q(\xx_1)$ can be arbitrary. Also, we remark that the theory presented here is only a particular instance of FM called conditional FM.

\section{Generalized Consistency Trajectory Models (GCTMs)}

We now present GCTMs, which generalize CTMs to enable translation between arbitrary distributions. We begin with a crucial theorem which proves we can parametrize the solution to the FM ODE \eqref{eq:flow_ode} in a form analogous to CTMs. The proof is deferred to Appendix \ref{proof:theorem1}.
\begin{theorem} \label{theorem:1}
The ODE \eqref{eq:flow_ode} is equivalent to
\begin{align}
d\xx_t = t^{-1}(\xx_t - \EE_{q(\xx_0 | \xx_t)}[\xx_0]) \, dt \label{eq:flow_ode_equiv}
\end{align}
defined on $t \in (0,1)$. Hence, we can express the solution to \eqref{eq:flow_ode} as
\begin{gather}
\textstyle G(\xx_t,t,s) = \frac{s}{t} \xx_t + \left(1 - \frac{s}{t}\right) g(\xx_t,t,s)  \label{eq:flow_G_param} \\
\textstyle \textit{where} \quad g(\xx_t,t,s) = \xx_t + \frac{t}{t - s} \int_t^s u^{-1}(\xx_u - \EE_{q(\xx_0 | \xx_u)}[\xx_0]) \, du. \label{eq:flow_g}
\end{gather}
\end{theorem}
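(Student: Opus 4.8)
The plan is to split the statement into two claims: first, that the conditional-expectation velocity in \eqref{eq:flow_ode} can be rewritten as the drift appearing in \eqref{eq:flow_ode_equiv}; and second, that once this drift is in hand, the integral solution of the ODE admits the claimed parametrization \eqref{eq:flow_G_param}--\eqref{eq:flow_g}. The first claim is the substantive part, and the second is essentially a one-line algebraic rearrangement.

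For the first claim, I would exploit the fact that $q(\xx_t \mid \xx_0, \xx_1)$ is a Dirac delta concentrated on the hyperplane $\xx_t = (1-t)\xx_0 + t\xx_1$. Consequently the posterior $q(\xx_0, \xx_1 \mid \xx_t)$ is supported, almost everywhere, on pairs obeying this linear constraint, which I can solve directly for the velocity: from $\xx_t = (1-t)\xx_0 + t\xx_1$ one obtains $t(\xx_1 - \xx_0) = \xx_t - \xx_0$, i.e. $\xx_1 - \xx_0 = (\xx_t - \xx_0)/t$. Since this identity holds pointwise on the support of the posterior, I can substitute it inside the expectation in \eqref{eq:flow_ode}. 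Because $\xx_t$ is held fixed under the conditioning it pulls out of the expectation, and $\xx_1$ no longer appears, so the joint posterior collapses to the marginal $q(\xx_0 \mid \xx_t)$. This yields $\EE_{q(\xx_0, \xx_1 \mid \xx_t)}[\xx_1 - \xx_0] = (\xx_t - \EE_{q(\xx_0 \mid \xx_t)}[\xx_0])/t$, which is exactly the drift in \eqref{eq:flow_ode_equiv}.

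For the second claim, I would write the solution of \eqref{eq:flow_ode_equiv} in its integral form, $G(\xx_t,t,s) = \xx_t + \int_t^s (\xx_u - \EE_{q(\xx_0 \mid \xx_u)}[\xx_0])/u \, du$, and verify it coincides with \eqref{eq:flow_G_param}. Substituting \eqref{eq:flow_g} into \eqref{eq:flow_G_param} and using $1 - s/t = (t-s)/t$, the prefactor $(t-s)/t$ exactly cancels the $t/(t-s)$ multiplying the integral, so the expression reduces to $\frac{s}{t}\xx_t + \frac{t-s}{t}\xx_t + \int_t^s(\cdots)\,du = \xx_t + \int_t^s(\cdots)\,du$, recovering the integral solution. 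The main obstacle is concentrated entirely in the first claim's key observation, namely that the delta coupling permits rewriting the velocity $\xx_1 - \xx_0$ pointwise in terms of $\xx_t$ and $\xx_0$; everything afterward is bookkeeping. I would only be careful to record that this pointwise substitution is valid almost everywhere with respect to the posterior, which is all that is needed since we are integrating against it.
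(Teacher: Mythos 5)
Your proposal is correct and follows essentially the same route as the paper's proof: the key step in both is using the Dirac coupling constraint $\xx_t = (1-t)\xx_0 + t\xx_1$ to rewrite $\xx_1 - \xx_0 = (\xx_t - \xx_0)/t$ pointwise on the posterior's support, collapsing the joint posterior to $q(\xx_0 \mid \xx_t)$. The only difference is that you spell out the cancellation $(1 - s/t) \cdot t/(t-s) = 1$ verifying \eqref{eq:flow_G_param}--\eqref{eq:flow_g} against the integral form, which the paper dismisses as a ``straightforward consequence.''
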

There are two differences between \eqref{eq:G_param} and \eqref{eq:flow_G_param}. First, the time variables $t$ and $s$ now lie in the unit interval $(0,1)$ instead of $(0,T)$, and second, $p(\xx_0 | \xx_u)$ is replaced with $q(\xx_0 | \xx_u)$.
The second difference is what enables translation between arbitrary distributions, as $q(\xx_0 | \xx_u)$ recovers clean images $\xx_0$ given images $\xx_u$ perturbed by arbitrary type of vectors (e.g., Gaussian noise, images, etc.), while $p(\xx_0 | \xx_u)$ recovers clean images $\xx_0$ only for Gaussian-perturbed samples $\xx_u$. We call a neural network $g_\theta$ which approximates \eqref{eq:flow_g} a GCTM, and we can train such a network by optimizing the FM counterparts of $\LL_{\CTM}$ and $\LL_{\DSM}$:
\begin{align}
\LL_{\GCTM}(\theta) \coloneqq \EE_{0 \leq s \leq u < t \leq 1} \EE_{q(\xx_t)} \left[ d\left(G_\theta(\xx_t,t,s), G_{\sg(\theta)}(\xx_{t \rightarrow u},u,s)\right) \right] \label{eq:gctm_loss}
\end{align}
where $G_\theta$ is \eqref{eq:flow_G_param} with $g$ replaced by $g_\theta$, and
\begin{align}
\LL_{\FM}(\theta) \coloneqq \EE_{0 \leq \hat{t} \leq 1} \EE_{q(\xx_0,\xx_1)} \EE_{q(\xx_{\hat{t}} | \xx_0,\xx_1)} \left[ \|\xx_0 - g_\theta(\xx_{\hat{t}},\hat{t},\hat{t})\|_2^2 \right]. \label{eq:gctm_dsm_loss}
\end{align}

The next theorem shows that the PFODE \eqref{eq:pf_ode} learned by CTMs is a special case of the ODE \eqref{eq:flow_ode_equiv} learned by GCTMs, so GCTMs indeed generalize CTMs.  The proof is deferred to Appendix \ref{proof:theorem2}.

\begin{theorem} \label{theorem:2}
Consider the choice of $q(\xx_0,\xx_1) = p(\xx_0) \cdot \mathcal{N}(\xx_1 | \bm{0},\bm{I})$. Let
\begin{align}
t' \coloneqq t/(1+t), \qquad \bar{\xx}_{t'} \coloneqq \xx_{t} / (1+t) \label{eq:change}
\end{align}
where $t \in (0,\infty)$ and $\xx_t$ follows the PFODE \eqref{eq:pf_ode}.
Then
\begin{align}
\EE_{p(\xx_0 | \xx_t)}[\xx_0] = \EE_{q(\xx_0 | \bar{\xx}_{t'})}[\xx_0] \label{eq:score_equiv}
\end{align}
and $\bar{\xx}_{t'}$ follows the ODE
\begin{align}
d\bar{\xx}_{t'} = {t'}^{-1} (\bar{\xx}_{t'} - \EE_{q(\xx_0 | \bar{\xx}_{t'})}[\xx_0]) \, dt' \label{eq:ode_equiv}
\end{align}
on $t' \in (0,1)$.
Furthermore, let $G_{\CTM}$, $g_{\CTM}$ denote CTM solutions and let $G_{\GCTM}$, $g_{\GCTM}$ denote GCTM solutions. Then with $s' = s / (1 + s)$,
\begin{align}
\begin{cases}
G_{\CTM}(\xx_t,t,s) = G_{\GCTM}(\bar{\xx}_{t'},t',s') \cdot (1 + s) \\
g_{\CTM}(\xx_t,t,t) = g_{\GCTM}(\bar{\xx}_{t'},t',t')
\end{cases} \label{eq:equality}
\end{align}
\end{theorem}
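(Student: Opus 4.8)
The plan is to base everything on a single change-of-variables identity at the level of conditional laws, from which both parts follow. Under the forward SDE \eqref{eq:forward_sde} the perturbation kernel is $\xx_t = \xx_0 + t\bm{\epsilon}$ with $\bm{\epsilon}\sim\mathcal{N}(\bm{0},\bm{I})$, i.e.\ $\xx_t\mid\xx_0 \sim \mathcal{N}(\xx_0, t^2\bm{I})$. First I would substitute \eqref{eq:change}, using $1+t = 1/(1-t')$, so that $\bar{\xx}_{t'} = \xx_t/(1+t) = (1-t')\xx_0 + t'\bm{\epsilon}$. With the Gaussian coupling $q(\xx_0,\xx_1) = p(\xx_0)\,\mathcal{N}(\xx_1\mid\bm{0},\bm{I})$ this is exactly a draw from the FM kernel, so $\bar{\xx}_{t'}\mid\xx_0 \sim \mathcal{N}((1-t')\xx_0, t'^2\bm{I})$ and $\bar{\xx}_{t'}$ is marginally distributed as $q(\xx_{t'})$. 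This identification is the backbone of the whole argument.

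For \eqref{eq:score_equiv} I would apply Bayes' rule to both posteriors and compare exponents. Writing $p(\xx_0\mid\xx_t) \propto \exp(-\|\xx_t-\xx_0\|_2^2/2t^2)\,p(\xx_0)$ and $q(\xx_0\mid\bar{\xx}_{t'}) \propto \exp(-\|\bar{\xx}_{t'}-(1-t')\xx_0\|_2^2/2t'^2)\,p(\xx_0)$, the substitution \eqref{eq:change} gives $\bar{\xx}_{t'}-(1-t')\xx_0 = (\xx_t-\xx_0)/(1+t)$ and $t'^2 = t^2/(1+t)^2$, so the two exponents coincide as functions of $\xx_0$. Hence the posteriors are identical and their means agree, which is \eqref{eq:score_equiv}.

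Next, for the ODE \eqref{eq:ode_equiv}, I would differentiate $\bar{\xx}_{t'} = (1-t')\xx_t$ along the flow, treating $t = t'/(1-t')$ so that $dt/dt' = 1/(1-t')^2$. Then $d\bar{\xx}_{t'}/dt' = (1-t')(d\xx_t/dt)(dt/dt') - \xx_t$, and inserting the PFODE \eqref{eq:pf_ode} for $d\xx_t/dt$ together with $1/t = (1-t')/t'$ collapses the first term to $(\xx_t - \EE_{p(\xx_0\mid\xx_t)}[\xx_0])/t'$. Substituting $\xx_t = \bar{\xx}_{t'}/(1-t')$ and replacing $\EE_{p(\xx_0\mid\xx_t)}[\xx_0]$ by $\EE_{q(\xx_0\mid\bar{\xx}_{t'})}[\xx_0]$ via \eqref{eq:score_equiv}, the remaining terms combine into $(\bar{\xx}_{t'}-\EE_{q(\xx_0\mid\bar{\xx}_{t'})}[\xx_0])/t'$, giving \eqref{eq:ode_equiv}. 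This chain-rule bookkeeping is the step I expect to be the main obstacle: it is pure calculus, but one must track the time reparametrization and the spatial rescaling simultaneously and not drop a factor of $(1-t')$.

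Finally, part (ii) follows from reading $G$ as the flow map of its ODE: $G_{\CTM}(\xx_t,t,s)=\xx_s$ is the solution of \eqref{eq:pf_ode} at time $s$ by \eqref{eq:G}, and likewise $G_{\GCTM}(\bar{\xx}_{t'},t',s')=\bar{\xx}_{s'}$ is the solution of \eqref{eq:ode_equiv}. Since the map $(t,\xx_t)\mapsto(t',\bar{\xx}_{t'})$ is a bijection carrying trajectories of \eqref{eq:pf_ode} to trajectories of \eqref{eq:ode_equiv} (established above), the reparametrized trajectory is by construction the GCTM solution, so $G_{\GCTM}(\bar{\xx}_{t'},t',s') = \bar{\xx}_{s'} = \xx_s/(1+s) = G_{\CTM}(\xx_t,t,s)/(1+s)$, which is the first line of \eqref{eq:equality}. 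The second line is immediate: by the limit identity \eqref{eq:g_limit} and its GCTM analogue from \eqref{eq:flow_g} we have $g_{\CTM}(\xx_t,t,t)=\EE_{p(\xx_0\mid\xx_t)}[\xx_0]$ and $g_{\GCTM}(\bar{\xx}_{t'},t',t')=\EE_{q(\xx_0\mid\bar{\xx}_{t'})}[\xx_0]$, and these coincide by \eqref{eq:score_equiv}.
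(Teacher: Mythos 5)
Your proposal is correct and follows essentially the same route as the paper's proof: Bayes' rule on the Gaussian perturbation kernels to obtain the score equivalence \eqref{eq:score_equiv}, a chain-rule change of variables in space and time to derive the ODE \eqref{eq:ode_equiv}, and the trajectory-bijection (flow-map) argument together with the limit identity \eqref{eq:g_limit} for part (ii). The only cosmetic differences are that you compare Gaussian exponents directly (correctly using variance $t^2\bm{I}$, where the paper's appendix writes $t\bm{I}$) rather than carrying the change-of-variables Jacobian through Bayes' rule, and that you perform the spatial and temporal substitutions in one step rather than sequentially; both reduce to the same computation.
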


In short, \eqref{eq:score_equiv} shows the equivalence of scores, and \eqref{eq:ode_equiv} shows the equivalence of ODEs. Thus, given $g_\theta$ trained with $\LL_{\FM}$ and $\LL_{\GCTM}$ with the setting of Thm. \ref{theorem:1}, we are able to evaluate diffusion scores and simulate diffusion PFODE trajectories with a simple change of variables \eqref{eq:change}, as shown in \eqref{eq:equality}.

Given GCTM's capability to replicate CTM, we will now outline the key components of GCTM that enable its significant extension for various downstream tasks. This flexibility offers a notable advantage of GCTM over CTM.

\subsection{The Design Space of GCTMs} \label{sec:design}

\noindent
\textbf{Coupling $q(\xx_0,\xx_1)$.} In contrast to diffusion which only uses the trivial coupling $q(\xx_0,\xx_1) = q(\xx_0)q(\xx_1)$ in $\LL_{\DSM}(\theta)$, FM allows us to use arbitrary joint distributions of $q(\xx_0)$ and $q(\xx_1)$ in $\LL_{\FM}(\theta)$. Intuitively, $q(\xx_0,\xx_1)$ encodes our inductive bias for what kind of pairs $(\xx_0,\xx_1)$ we wish the model to learn, since FM ODE is distributed $q(\xx_t)$ at each time $t$, and $q(\xx_t)$ is the distribution of $(1-t) \xx_0 + t \xx_1$ for $(\xx_0,\xx_1) \sim q(\xx_0,\xx_1)$. Here, we list three valid couplings of GCTM as examples (see Alg. \ref{alg:sampling} for code). In contrast,
CTM
only use a special case of independence coupling.
\begin{itemize}
\item \textit{Independent coupling}:
\begin{align}
\label{indepcoup}
q(\xx_0,\xx_1) = q(\xx_0)q(\xx_1)
\end{align}
This coupling reflects no prior assumption about the relation between $\xx_0$ and $\xx_1$. As shown earlier, diffusion models use this type of coupling.
\item \textit{(Entropy-regularized) Optimal transport coupling}:
\begin{align}
q = \argmin_{\Tilde{q}} \EE_{\Tilde{q}(\xx_0,\xx_1)} \left[ \| \xx_0 - \xx_1 \|_2^2 \right] - \tau H(\Tilde{q})
\end{align}
where $\argmin$ is over all joint distributions $\Tilde{q}$ of $q(\xx_0)$ and $q(\xx_1)$, $H$ denotes entropy, and $\tau$ is the regularization coefficient. This coupling reflects the inductive bias that $\xx_0$ and $\xx_1$ must be close together under the Euclidean distance. In practice, we use the Sinkhorn-Knopp (SK) algorithm \citep{cuturi2013sinkhorn} to sample OT pairs. A pseudo-code for SK is given as Alg. \ref{alg:sk} in Appendix \ref{append:algo}.
\item \textit{Supervised coupling}:
\begin{align}
\label{supcoup}
q(\xx_0,\xx_1) = \int q(\xx_0) q(\HH | \xx_0) \delta_{\HH \xx_0}(\xx_1) \, d\HH
\end{align}
where $\HH \sim q(\HH | \xx_0)$ is a random operator, possibly dependent on $\xx_0$, which maps ground-truth data $\xx_0$ to observations $\xx_1$, \ie, $\xx_1 = \HH \xx_0$. For instance, in the context of learning an inpainting model, $\HH$ is could be a random masking operator. For a fixed $\HH$, $q(\HH | \xx_0)$ reduces to a Dirac delta. With this coupling, the ODE \eqref{eq:flow_ode_equiv} tends to map observed samples $\xx_1$ to ground-truth data $\xx_0$ as $t \rightarrow 0$.
\end{itemize}

\begin{table}[t]
\vspace{-0.4cm}
\scalebox{0.83}{
\begin{minipage}[b]{0.49\linewidth}
\begin{algorithm}[H]
\caption{$q(\xx_0,\xx_1)$ Sampling}
\label{alg:sampling}
\begin{algorithmic}[1]
\State \textbf{Assume} $m = 1, \ldots, M$, Batch size $M$
\If{Coupling is \texttt{Independent}}
\State $\{\xx_0^m\}_m \sim q(\xx_0)$, $\{\xx_1^m\}_m \sim q(\xx_1)$
\State \textbf{Return} $\{(\xx_0^m,\xx_1^m)\}_m$
\ElsIf{Coupling is \texttt{OT}}
\State $\{\xx_0^m\}_m \sim q(\xx_0)$, $\{\xx_1^m\}_m \sim q(\xx_1)$
\State \textbf{Return} $\text{SK}(\{\xx_0^m\}_m, \{\xx_1^m\}_m, \tau)$
\ElsIf{Coupling is \texttt{Supervised}}
\State $\{\xx_0^m\}_m \sim q(\xx_0)$, $\HH^m \sim q(\HH | \xx_0^m)$
\State \textbf{Return} $\{(\xx_0^m,\HH^m \xx_0^m)\}_m$
\EndIf
\end{algorithmic}
\end{algorithm}
\end{minipage}}
\hfill
\scalebox{0.85}{
\begin{minipage}[b]{.65\textwidth}
\begin{algorithm}[H]
\caption{GCTM Training}
\label{alg:training}
\begin{algorithmic}[1]
\While{training}
\State Sample times $\{\hat{t}^m\}_m$, $\{(t^m, s^m, u^m)\}_m$
\State With Alg. \ref{alg:sampling}, $\{(\xx_0^m,\xx_1^m)\}_m \sim q(\xx_0,\xx_1)$
\State $\xx_{\hat{t}^m}^m \leftarrow (1 - \hat{t}^m) \xx_0^m + \hat{t}^m \xx_1^m$
\State $\xx_{t^m}^m \leftarrow (1 - t^m) \xx_0^m + t^m \xx_1^m$
\State $\LL_{\FM}(\theta) = \frac{1}{M} \sum_m \|\xx_0^m - g_\theta(\xx_{\hat{t}^m}^m,\hat{t}^m,\hat{t}^m) \|_2^2$
\State $\widetilde{\xx}_{s^m}^m \leftarrow G_{\sg(\theta)}(\xx_{t^m \rightarrow u^m}^m,u^m,s^m)$
\State $\LL_{\GCTM}(\theta) = \frac{1}{M} \sum_{m=1}^M d(G_\theta(\xx_{t^m}^m,t^m,s^m),\widetilde{\xx}_{s^m}^m)$
\State Minimize $\LL_{\GCTM}(\theta) + \lambda_{\FM} \LL_{\FM}(\theta)$
\EndWhile
\end{algorithmic}
\end{algorithm}
\end{minipage}}
\end{table}

\noindent
\textbf{Gaussian perturbation.} The cardinality of the support of $q(\xx_1)$ must be larger than or equal to the cardinality of the support of $q(\xx_0)$ for there to be a well-defined ODE from $q(\xx_1)$ to $q(\xx_0)$. This is because the ODE trajectory given an initial condition is unique, so a single sample $\xx_1 \sim q(\xx_1)$ cannot be transported to multiple points in the support of $q(\xx_0)$. A simple way to address this problem is to add small Gaussian noise to $q(\xx_1)$ samples such that $q(\xx_1)$ is supported everywhere.

We emphasize that Gaussian perturbation allows GCTMs to achieve one-to-many generation when we use the supervised coupling. Concretely, consider the scenario where there are multiple labels $\xx_0 \sim q(\xx_0 | \xx_1)$ which correspond to an observed $\xx_1$. Then, the perturbation $\bm{\epsilon}$ added to $\xx_1$ acts as a source of randomness, allowing the GCTM network to map $\xx_1 + \bm{\epsilon}$ to distinct labels $\xx_0$ for distinct $\bm{\epsilon}$. This stands in contrast to simply regressing the neural network output of $\xx_1$ to corresponding labels $\xx_0 \sim q(\xx_0 | \xx_1)$ with $\ell_2$ loss, as this will cause the network to map $\xx_1$ to the blurry posterior mean $\EE_{q(\xx_0|\xx_1)}[\xx_0]$ instead of a sharp image $\xx_0$. Indeed, in Section \ref{sec:i2i}, we observe blurry outputs if we use regression instead of GCTMs.

\section{Experiments} \label{sec:experiments}

We now explore the possibilities of GCTMs on unconditional generation, image-to-image translation, image restoration, image editing, and latent manipulation. In particular, GCTM admits NFE = 1 sampling via $\xx_t \mapsto G_\theta(\xx_t,t,0)$. Due to the similarities between CTMs and GCTMs as detailed in Thm. \ref{theorem:1}, GCTMs can be trained using CTM training methods. In fact, we run Alg. \ref{alg:training} with the method in Section 5.2 of \citep{kim2023consistency} to train all GCTMs without pre-trained teacher models. A complete description of training settings are deferred to Appendix \ref{append:settings}.

\begin{figure}[t]
\centering
\begin{subfigure}{0.32\linewidth}
\includegraphics[width=1.0\linewidth]{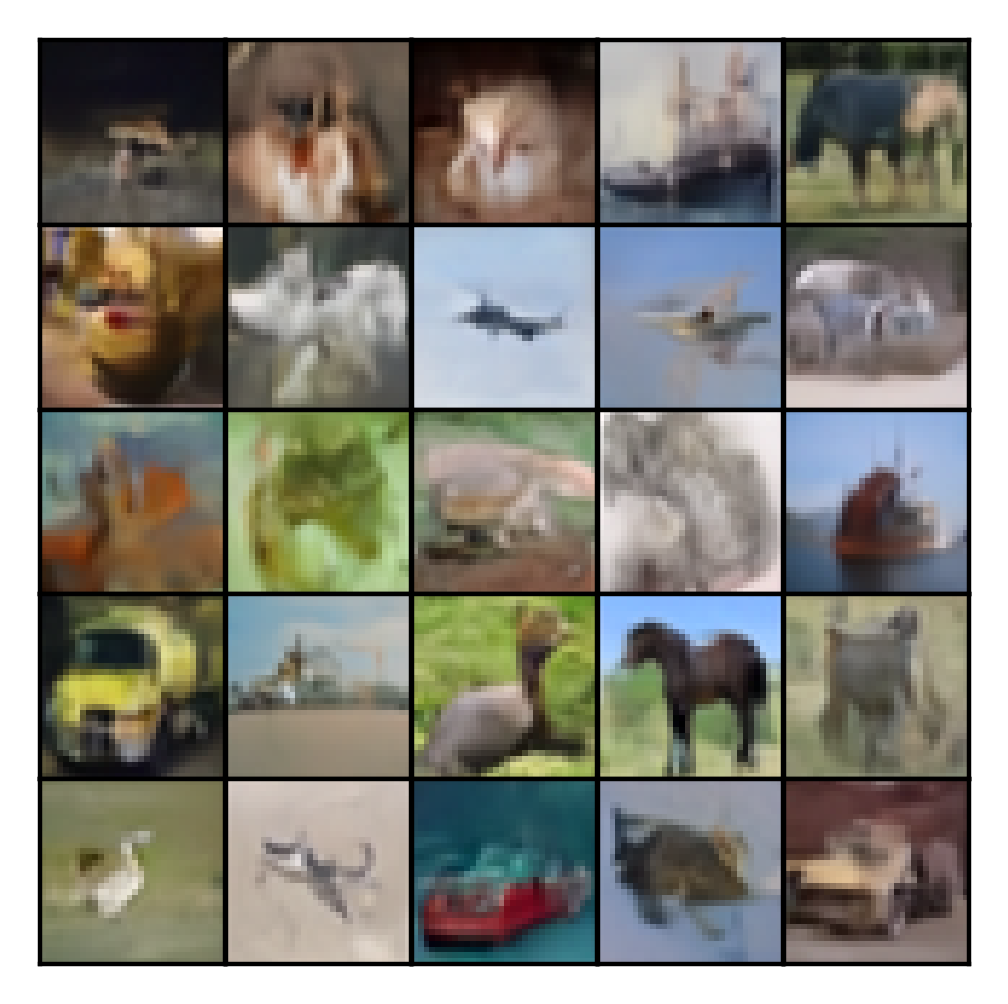}
\caption{Indep. $N = 4$, FID = 24.7}
\end{subfigure}
\hfill
\begin{subfigure}{0.32\linewidth}
\includegraphics[width=1.0\linewidth]{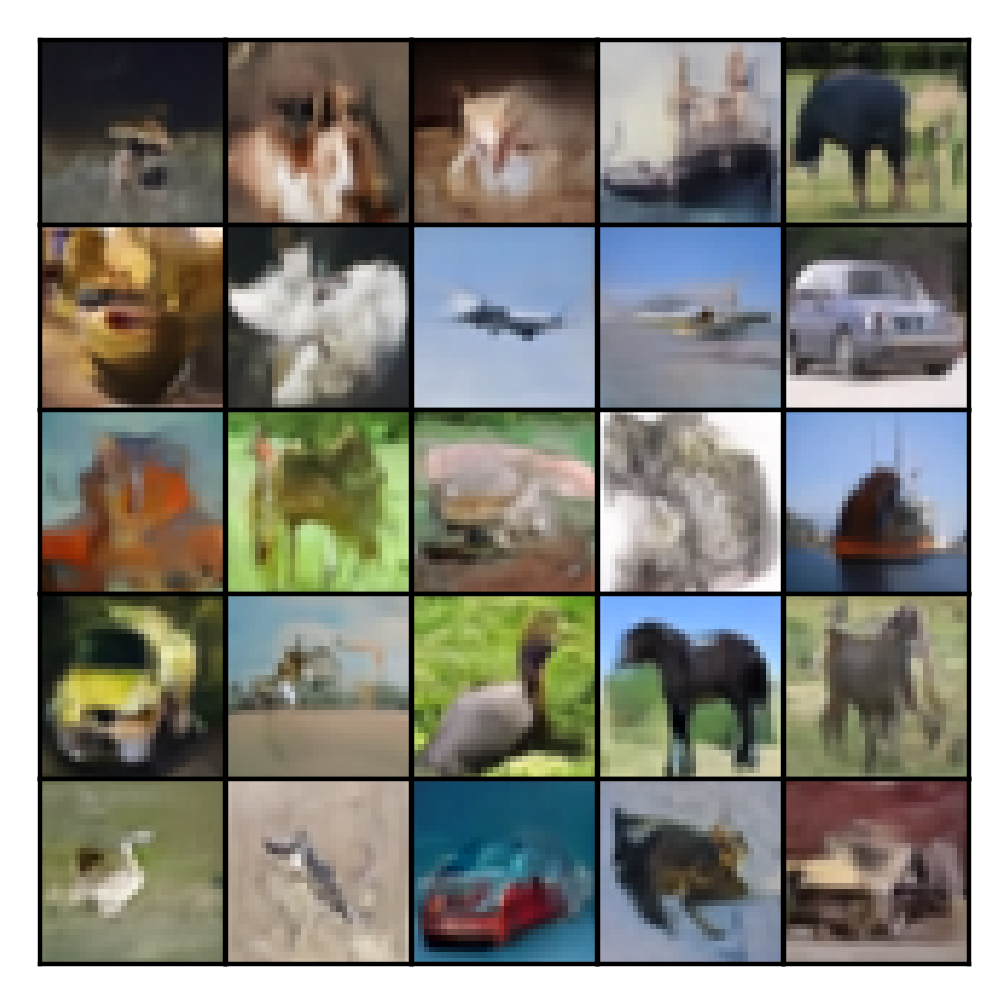}
\caption{OT $N = 4$, FID = 18.2}
\end{subfigure}
\hfill
\begin{subfigure}{0.32\linewidth}
\includegraphics[width=1.0\linewidth]{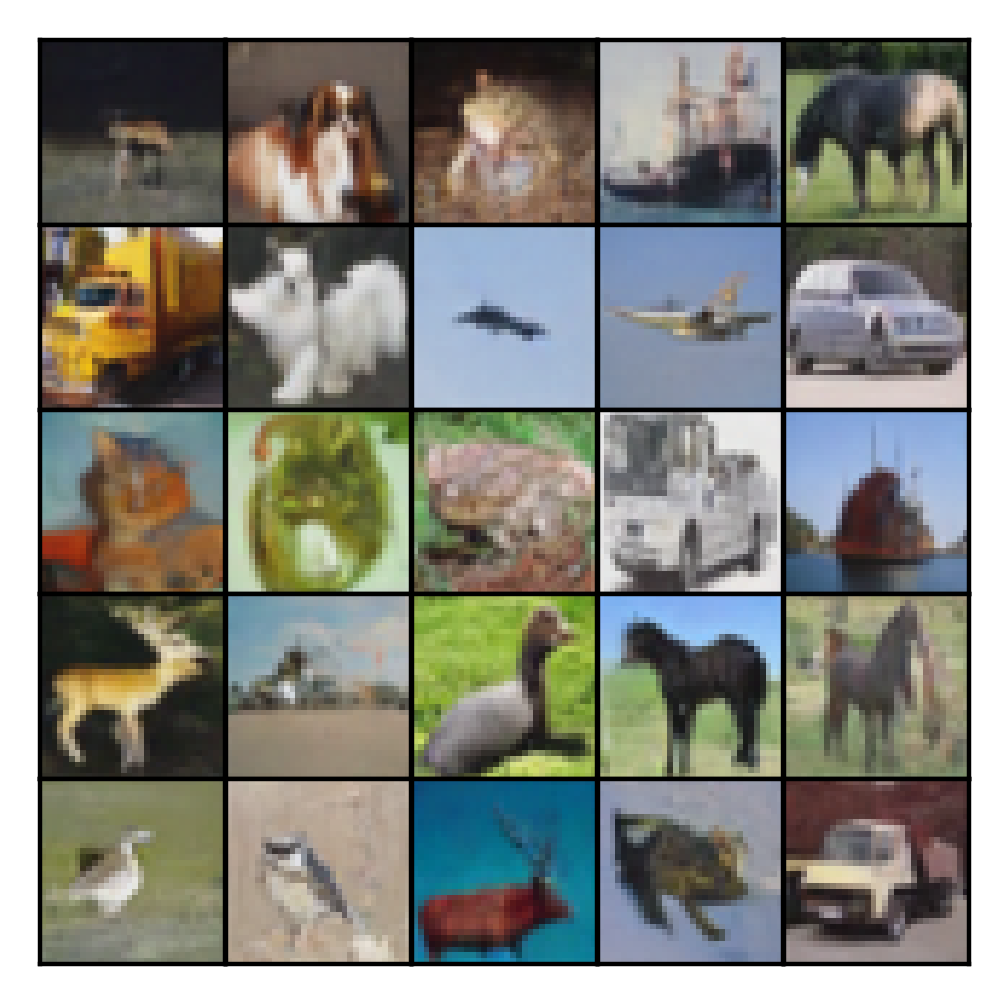}
\caption{OT $N = 32$, FID = 5.32}
\end{subfigure}
\vspace{-0.1cm}
\caption{CIFAR10 unconditional samples with NFE = 1.}
\label{fig:cifar10_samples}
\end{figure}

\subsection{Fast Unconditional Generation} \label{sec:unconditional}

In the scenario where we do not have access to data pairs, we must resort to either the independent coupling or the OT coupling. Here, we show that the optimal transport coupling can significantly accelerate the convergence speed of GCTMs during training, especially when we use a smaller number of timesteps $N$.

\begin{wraptable}{r}{0.4\linewidth}
\begin{minipage}[b]{1.0\linewidth}
\centering
\includegraphics[width=0.75\linewidth]{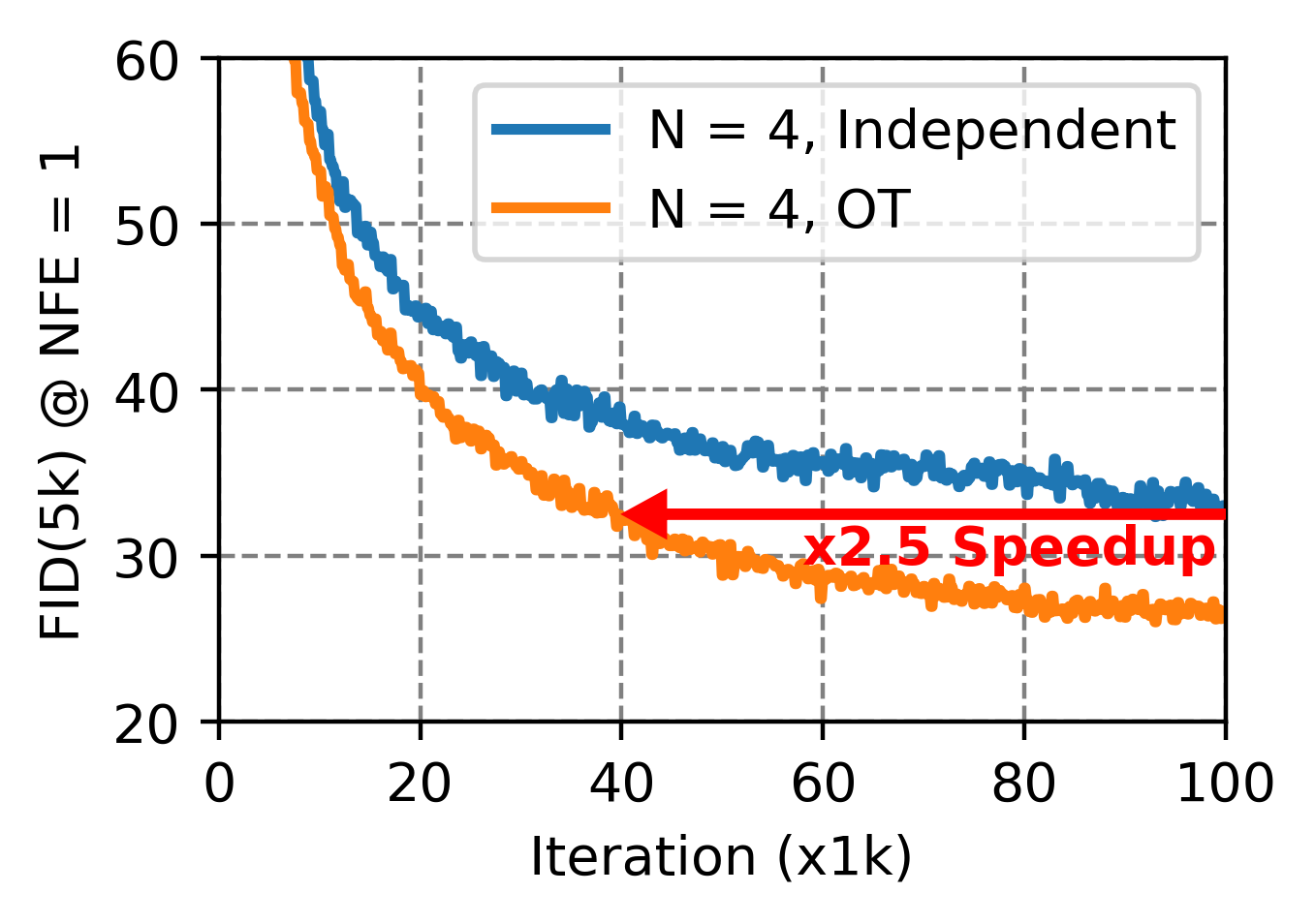}
\vspace{-0.3cm}
\captionof{figure}{Training acceleration.}
\label{fig:cifar10_accel}
\vspace{0.2cm}
\end{minipage}
\begin{minipage}[b]{1.0\linewidth}
\centering
\resizebox{0.7\columnwidth}{!}{%
\begin{tabular}{ccc}
\toprule
\textbf{Method} & \textbf{Teacher  } & \textbf{FID} $\downarrow$ \\
\cmidrule{1-3}
CTM & \cmark & 5.28 \\
    & \xmark & 9.00 \\
CM  & \cmark & 3.55 \\
    & \xmark & 8.70 \\
iCM & \xmark & 2.51 \\
\rowcolor{Gray}
GCTM (OT) & \xmark & 5.32 \\
\bottomrule
\end{tabular}}
\caption{FID at NFE = 1.}
\label{table:cifar10_fid}
\end{minipage}
\vspace{-1.0cm}
\end{wraptable}

Using small $N$ may be of interest when we wish to trade-off training speed for performance, since per-iteration training cost of GCTMs increases linearly with $N$. For instance, when $t = 1$ and $u = s = 0$ in the GCTM loss \eqref{eq:gctm_loss}, we need to integrate along the entire time interval $(0,1)$, which requires $N$ steps of ODE integration.

In Figure \ref{fig:cifar10_accel}, we observe up to $\times$2.5 acceleration in terms of training iterations when we use OT coupling instead of independent coupling. Indeed, in Figure \ref{fig:cifar10_samples}, OT coupling samples are visually sharper than independent coupling samples. We postulate this is because (1) OT coupling leads to straighter ODE trajectories, so we can accurately integrate ODEs with smaller $N$, and (2) lower variance from OT pairs leads to smaller variance in loss gradients, as discussed in \citep{pooladian2023multisample}.

In Table \ref{table:cifar10_fid}, we compare the Fr\'echet Inception Distance (FID) \citep{heusel2017fid} of GCTM and relevant baselines on CIFAR10 with NFE = 1. In the setting where we do not use a pre-trained teacher diffusion model, GCTM with OT coupling outperforms all methods with the exception of iCM \citep{song2024icm}, which is an improved variant of CM. Moreover, GCTM is on par with CTM trained with a teacher. We speculate that further fine-tuning of hyper-parameters could push the performance of GCTMs to match that of iCMs, and we leave this for future work.

\subsection{Fast Image-to-Image Translation} \label{sec:i2i}

\begin{table}[t]
\vspace{-0.5cm}
\begin{center}
\renewcommand{\arraystretch}{1}
\resizebox{1.0\textwidth}{!}
{\small
\begin{tabular}{lccccccccccc}
\toprule
\multicolumn{1}{c}{\multirow{2}{*}{Method}} & \multicolumn{1}{c}{\multirow{2}{*}{NFE}} \ & \multicolumn{1}{c}{\multirow{2}{*}{
\begin{tabular}{@{}c@{}}
     Time\\
     (ms)
\end{tabular}
 }} & \multicolumn{3}{c}{Edges$\rightarrow$Shoes} &  \multicolumn{3}{c}{Night$\rightarrow$Day} & \multicolumn{3}{c}{Facades}
  \\ \cmidrule(lr){4-6} \cmidrule(lr){7-9} \cmidrule(lr){10-12}
\multicolumn{1}{c}{} & \multicolumn{1}{c}{} & \multicolumn{1}{c}{} &
  \multicolumn{1}{c}{FID $\downarrow$} &
  \multicolumn{1}{c}{IS $\uparrow$ } &
  \multicolumn{1}{c}{LPIPS $\downarrow$ } &
  \multicolumn{1}{c}{FID $\downarrow$} &
  \multicolumn{1}{c}{IS $\uparrow$ } &
  \multicolumn{1}{c}{LPIPS $\downarrow$ } &
  \multicolumn{1}{c}{FID $\downarrow$} &
  \multicolumn{1}{c}{IS $\uparrow$ } &
  \multicolumn{1}{c}{LPIPS $\downarrow$ } \\
\cmidrule{1-12}
Regression & 1  & 87 
        & 54.3 & \underline{3.41} & \underline{0.100} 
        & 189.2 & \underline{1.85} & \underline{0.373}
        & \underline{121.8} & \textbf{3.28} & 0.274 \\
Pix2Pix \citep{isola2017pix2pix} & 1  & 33 & 77.0 & 3.17 & 0.208 & 158.0 & 1.68 & 0.418 & 134.1 & 2.74 & 0.288 \\
Palette \citep{saharia2022palette} & 5  & 166 
& 334.1 & 1.90 & 0.861
& 350.2 & 1.16 & 0.707
& 259.3 & 2.47 & 0.394 \\
\iisb \citep{liu20232i2sb}
        & 5  & 284 & \underline{53.9} & 3.23 & 0.154 & \textbf{145.8} & 1.79 & 0.376 & 135.2 & 2.51 & \underline{0.269}\\
\rowcolor{Gray}
GCTM    & 1  & 87
        & \textbf{40.3} & \textbf{3.54}  & \textbf{0.097}
        & \underline{148.8} & \textbf{2.00} & \textbf{0.317} 
        &\textbf{ 111.3} & \underline{2.99} & \textbf{0.230}\\
\bottomrule
\end{tabular}}
\end{center}
\caption{Quantitative evaluation of I2I translation. Best is in \textbf{bold}, second best is \underline{underlined}.}
\label{table:i2i}
\end{table}

\begin{figure}[t]
\centering
\small{
\includegraphics[width = \textwidth]{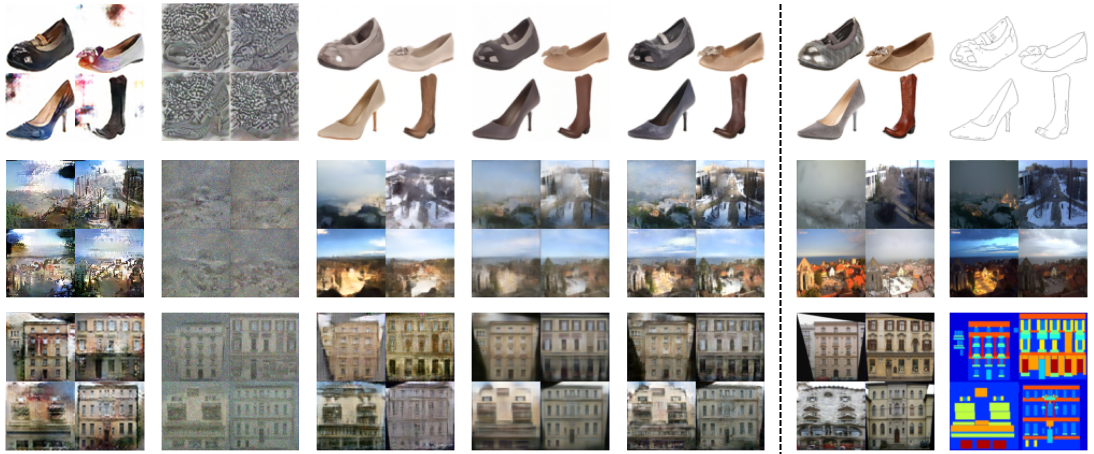}
\put(-384,173){Pix2Pix}
\put(-328,173){Palette}
\put(-266,173){\iisb }
\put(-220,173){Regression}
\put(-158,173){GCTM}
\put(-86,173){$\xx_0$}
\put(-36,173){$\xx_1$}
}
\vspace{-0.5cm}
\caption{Qualitative evaluation of image-to-image translation on Edges$\rightarrow$Shoes (top), Night$\rightarrow$Day (middle) and Facades (bottom). NFE = 5 for I$^2$SB and Palette.}
\vspace{-0.5cm}
\label{fig:i2i}
\end{figure}

Unlike previous distillation methods such as CM or CTM, GCTM can learn ODEs between arbitrary distributions, enabling image-to-image translation. To numerically validate this theoretical improvement, we train GCTMs on three translation tasks Edges$\rightarrow$Shoes, Night$\rightarrow$Day, and Facades \citep{isola2017pix2pix}, scaled to 64$\times$64, with the supervised coupling. We consider three baseline methods: $\ell_2$-regression, Pix2Pix \citep{isola2017pix2pix}, Palette \citep{saharia2022palette} and \iisb \citep{liu20232i2sb}. To evaluate translation performance, we use FID and Inception Score (IS) \citep{barratt2018inception} to rate translation quality and LPIPS \citep{zhang2018lpips} to assess faithfulness to input. We control NFEs such that all methods have similar inference times, and we calculate all metrics on validation samples.

In Table \ref{table:i2i}, we see GCTM shows strong performance on all tasks. In particular, GCTM is good at preserving input structure, as supported by low LPIPS values.
SDE-based methods \iisb \ and Palette show poor performance at low NFEs, even when trained with pairs.
Qualitative results in Figure \ref{fig:i2i} are in line with the metrics. Baselines produce blurry or nonsensical samples, while GCTM produces sharp and realistic images that are faithful to the input.

\begin{table}[t]
\vspace{-0.3cm}
\begin{center}
\renewcommand{\arraystretch}{1}
\resizebox{\textwidth}{!}
{\small
\begin{tabular}{ccccccccccccc}
\toprule
& \multirow{2}{*}{Method} & \multirow{2}{*}{NFE} & \multirow{2}{*}{
\begin{tabular}{@{}c@{}}
     Time\\
     (ms)
\end{tabular}
 }
 &\multicolumn{3}{c}{SR2 - Bicubic} &  \multicolumn{3}{c}{Deblur - Gaussian} & \multicolumn{3}{c}{Inpaint - Center}
  \\ \cmidrule(lr){5-7} \cmidrule(lr){8-10} \cmidrule(lr){11-13}
& \multicolumn{1}{c}{} & \multicolumn{1}{c}{} & \multicolumn{1}{c}{} &
  \multicolumn{1}{c}{PSNR $\uparrow$} &
  \multicolumn{1}{c}{SSIM $\uparrow$ } &
  \multicolumn{1}{c}{LPIPS $\downarrow$ } &
  \multicolumn{1}{c}{PSNR $\uparrow$} &
  \multicolumn{1}{c}{SSIM $\uparrow$ } &
  \multicolumn{1}{c}{LPIPS $\downarrow$ } &
  \multicolumn{1}{c}{PSNR $\uparrow$} &
  \multicolumn{1}{c}{SSIM $\uparrow$ } &
  \multicolumn{1}{c}{LPIPS $\downarrow$ } \\
\cmidrule{1-13}
& DPS & 32 & 1079
    & 31.19 & 0.935 & \underline{0.015}
    & 27.88 & 0.878 & 0.041
    & \textbf{24.69} & \textbf{0.876} & \textbf{0.042}\\
& CM  & 32 & 1074
    & 30.80 & 0.930 & \textbf{0.010}
    & 27.85 & 0.871 & \textbf{0.027}
    & 23.02 & 0.857 & 0.050\\
\rowcolor{Gray}
\cellcolor{white} \multirow{-3}{0.5cm}{\rotatebox[origin=c]{90}{\textit{\textbf{0-Shot}}}} & GCTM & 32 & 1382
    & \textbf{31.61} & \textbf{0.939} & \underline{0.015} 
    & \textbf{28.19} & \textbf{0.885} & \underline{0.037}
    & \underline{24.47} & \textbf{0.876} & \textbf{0.042}\\
\cmidrule{1-13}
 & Regression & 1 & 87 & \textbf{33.46} & \textbf{0.964} & \underline{0.015} & \textbf{31.19} & \textbf{0.942} & \underline{0.015} & \textbf{28.76} & \textbf{0.922} & \underline{0.028} \\

& Palette & 5 & 166
& 17.88 & 0.556 & 0.234
& 17.81 & 0.571 & 0.234
& 16.12 & 0.489 & 0.357\\
 
& \iisb 
    & 5 & 284
    & 26.74 & 0.869 & 0.033 
    & 26.20 & 0.853 & 0.038 
    & 26.01 & 0.874 & 0.038 \\

\rowcolor{Gray}
\cellcolor{white} \multirow{-4}{0.5cm}{\rotatebox[origin=c]{90}{\textit{\textbf{Superv.}}}} & GCTM & 1 & 87 & \underline{32.37} & \underline{0.954} & \textbf{0.009} & \underline{30.56} & \underline{0.935} & \textbf{0.009} & \underline{27.37} & \underline{0.896} & \textbf{0.027} \\
\bottomrule
\end{tabular}
}
\caption{Quantitative evaluation of image restoration on FFHQ.}
\label{table:inverse}
\end{center}
\vspace{-0.5cm}
\end{table}

\subsection{Fast Image Restoration} \label{sec:restoration}

\begin{wrapfigure}{r}{0.3\textwidth}
\vspace{-1cm}
\centering
\includegraphics[width=0.3\textwidth]{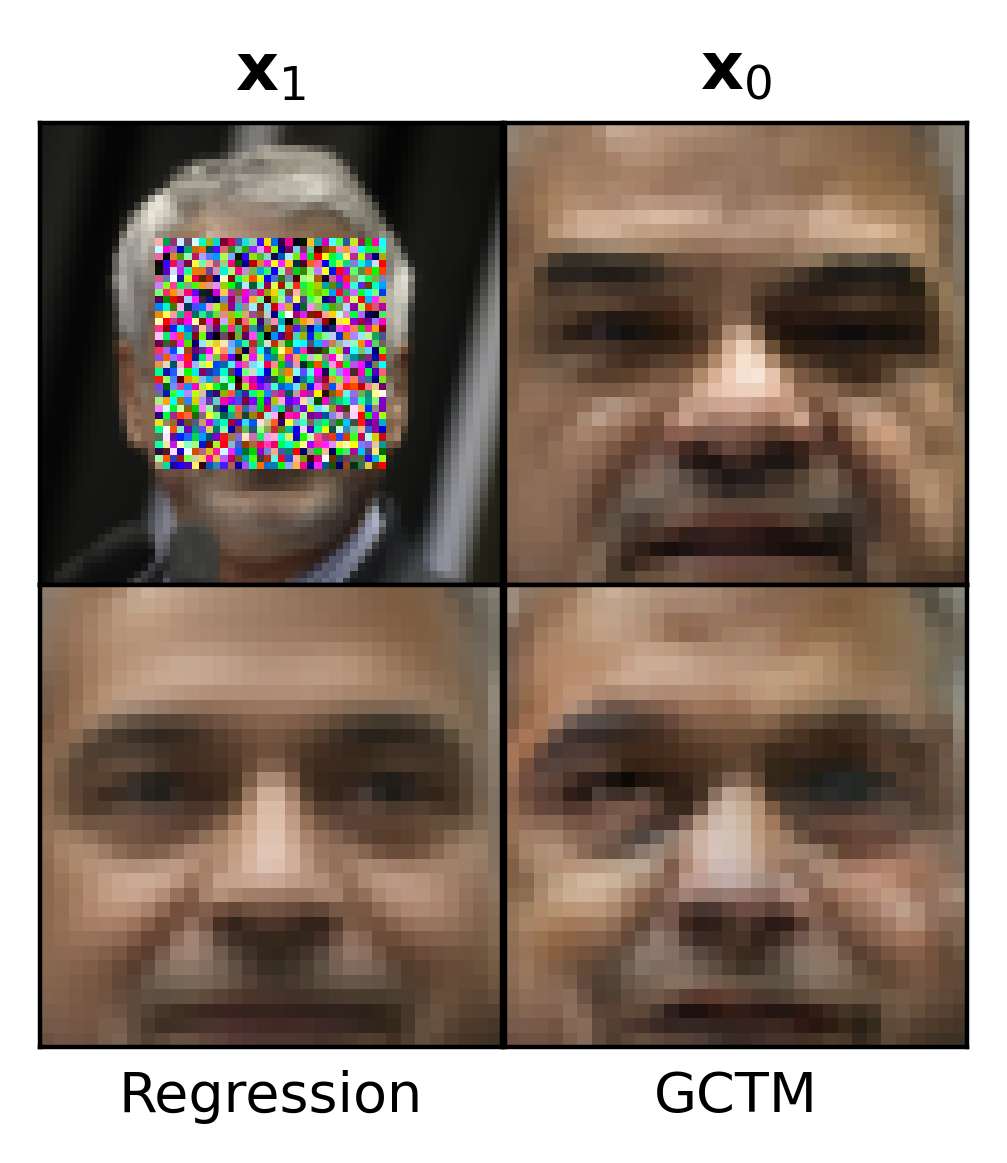}
\vspace{-0.5cm}
\caption{Reg. vs. GCTM.}
\label{fig:inv_sample}
\vspace{-0.7cm}
\end{wrapfigure}

We consider two settings on the FFHQ $64 \times 64$ dataset, where we either know or do not know the corruption operator. In the former case, we train an unconditional GCTM with the independent coupling, with which we implement three zero-shot image restoration algorithms: DPS, CM-based image restoration, and the guided generation algorithm illustrated in Figure \ref{fig:main}, where the loss is given as inconsistency between observations (see Append. \ref{append:algo-inv} for pseudo-codes and a detailed discussion of the differences). In the latter case, we train a GCTM with the supervised coupling and $\ell_2$-regression, \iisb \, and Palette for comparison. Notably, GCTM is the only model applicable to both situations, thanks to the flexible choice of couplings. We again control NFEs such that all methods have similar inference speed.

Table \ref{table:inverse} presents the numerical results in both settings. In the zero-shot setting, we see GCTM outperforming both DPS and CM. In particular, CM is slightly worse than DPS. Sample quality degradation due to error accumulation for CMs at large NFEs have already been observed in unconditional generation (\eg, see Fig. 9 in \citep{kim2023consistency}), and we speculate a similar problem occurs for CMs in image restoration as well.
On the other hand, GCTMs avoid this problem, as they are able to traverse to a smaller time using the ODE velocity approximated via $g_\theta$.
In the supervised setting, we see regression attains the best PSNR and SSIM. This is a natural consequence of perception-distortion trade-off. Specifically, regression minimizes the MSE loss, so it leads to best distortion metrics \citep{delbracio2023indi} while producing blurry results. GCTM, which provides best results if we exclude regression on distortion metrics (PSNR and SSIM) and best results on perception metrics (LPIPS), strikes the best balance between perception and distortion. For instance, in Fig. \ref{fig:inv_sample} inpainting results, regression sample lacks detail (e.g., wrinkles) while GCTM sample is sharp. We show more samples in Appendix \ref{append:exp}.
In particular, in Table \ref{table:gctm_256}, we demonstrate image restoration task of GCTM on ImageNet with higher resolution  (256 $\times$ 256 resolution) to demonstrate it scalability.

\begin{figure}[t]
\centering
\includegraphics[width=0.95\linewidth]{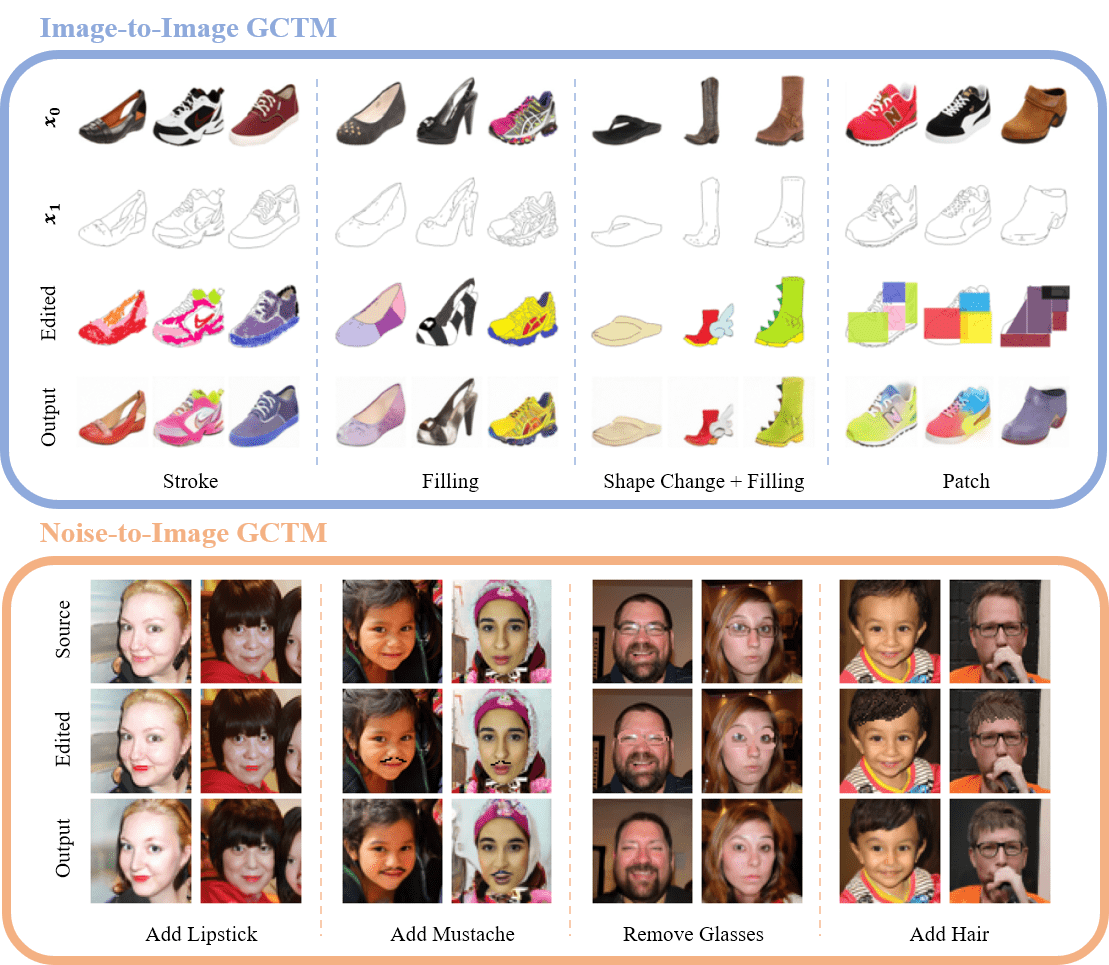}
\caption{Image editing with GCTM, NFE = 1.}
\label{fig:editing}
\vspace{-0.5cm}
\end{figure}

\subsection{Fast Image Editing} \label{sec:editing}

In this section, we demonstrate that GCTM can perform realistic and faithful image editing without any special purpose training. Figure \ref{fig:editing} shows image editing with an Edges$\rightarrow$Shoes model and an unconditional FFHQ model. On Edges$\rightarrow$Shoes, to edit an image, a user creates an edited input, which is an edge image painted to have a desired color and / or modified to have a desired outline. We then interpolate the edited input and the original edge image to a certain time point $t = s$ and send it to time $t = 0$ with GCTM to produce the output. On FFHQ, analogous to SDEdit \citep{meng2021sdedit}, we interpolate an edited image with Gaussian noise and send it to time $t = 0$ with GCTM to generate the output. In contrast to previous image editing models such as SDEdit, GCTM requires only a single step to edit an image. Moreover, we observe that GCTM faithfully preserves source image structure while making the desired changes to the image.

\begin{figure}[t]
\centering
\begin{subfigure}{0.6\linewidth}
\includegraphics[width=1.0\linewidth]{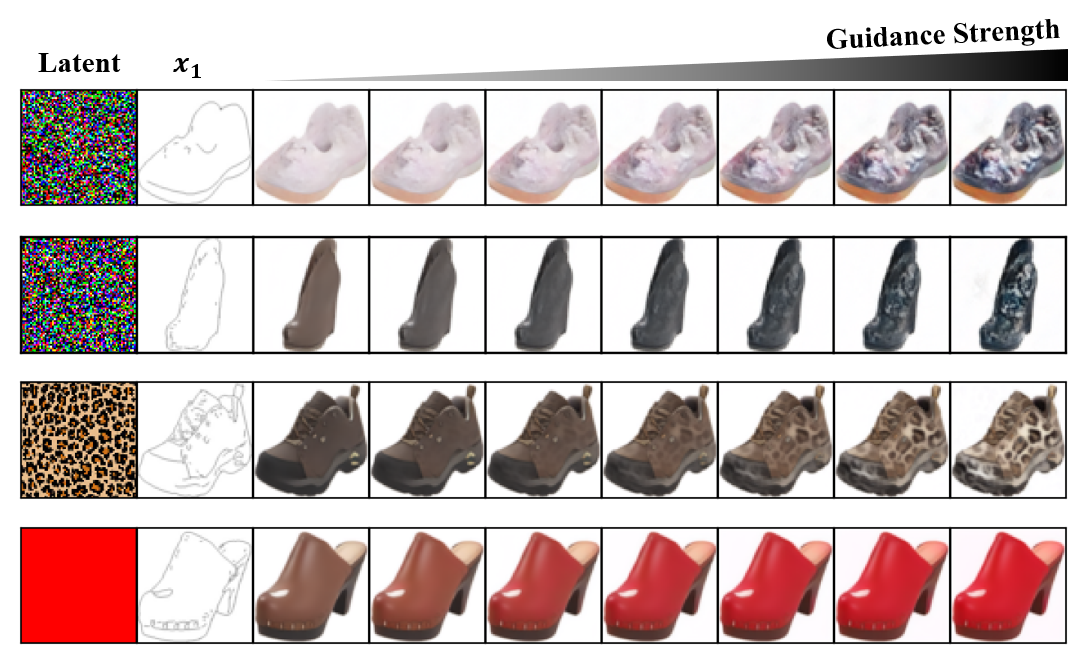}
\caption{Controlling latent vector strength $\gamma$}
\end{subfigure}
\hfill
\begin{subfigure}{0.35\linewidth}
\includegraphics[width=1.0\linewidth]{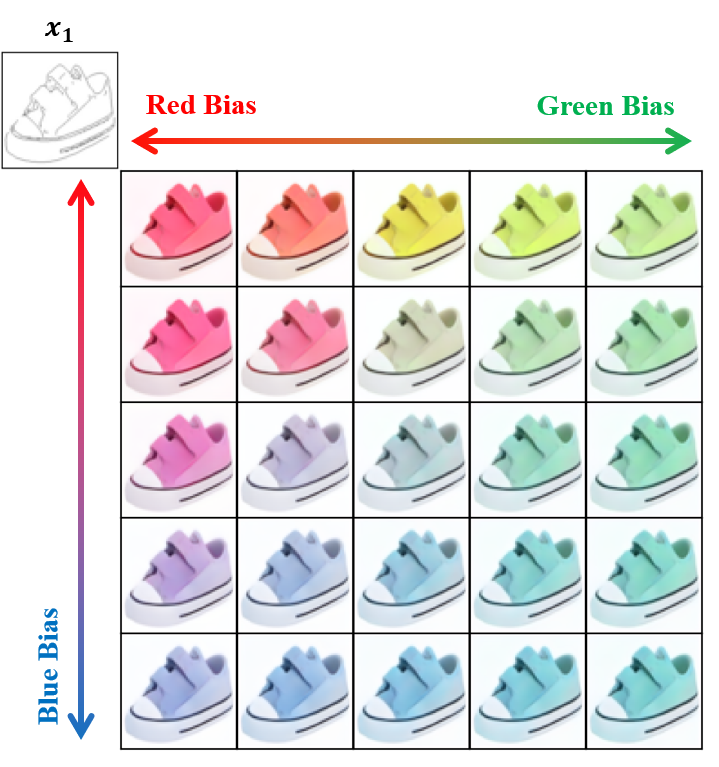}
\caption{Mixing latent vectors}
\end{subfigure}
\caption{Latent manipulation with image-to-image GCTM, NFE = 1.}
\label{fig:latent}
\vspace{-0.5cm}
\end{figure}

\subsection{Fast Latent Manipulation}

In this section, we demonstrate that GCTMs have a highly controllable latent space. Since there are plenty of works on latent manipulation with unconditional diffusion models, we focus on latent manipulation with GCTMs trained for image-to-image translation. For an image-to-image translation GCTM trained with Gaussian perturbation in Section \ref{sec:design}, we assert that the perturbation added to $\xx_1$ can be manipulated to produce desired outputs $\xx_0$. In other words, the perturbation acts as a ``latent vector'' which controls the factors of variation in $\xx_0$. To test this hypothesis, in Figure \ref{fig:latent}, we display outputs $G_\theta(\xx_1 + \gamma \bm{\epsilon}, 1, 0)$ for particular choices of $\bm{\epsilon}$. In the left panel, we observe generated outputs increasingly adhere to the texture of latent $\bm{\epsilon}$ as we increase guidance strength $\gamma$. Interestingly, GCTM generalizes well to latent vectors unseen during training, such as leopard spots or the color red. In the right panel, we explore the effect of linearly combining red, green, and blue latent vectors. We see that the desired color change is reflected faithfully in the outputs. These observations validate our hypothesis that image-to-image GCTMs have an interpretable latent space.

\subsection{Ablation Study}

\begin{wrapfigure}{r}{0.4\textwidth}
\vspace{-0.5cm}
\centering
\includegraphics[width=0.4\textwidth]{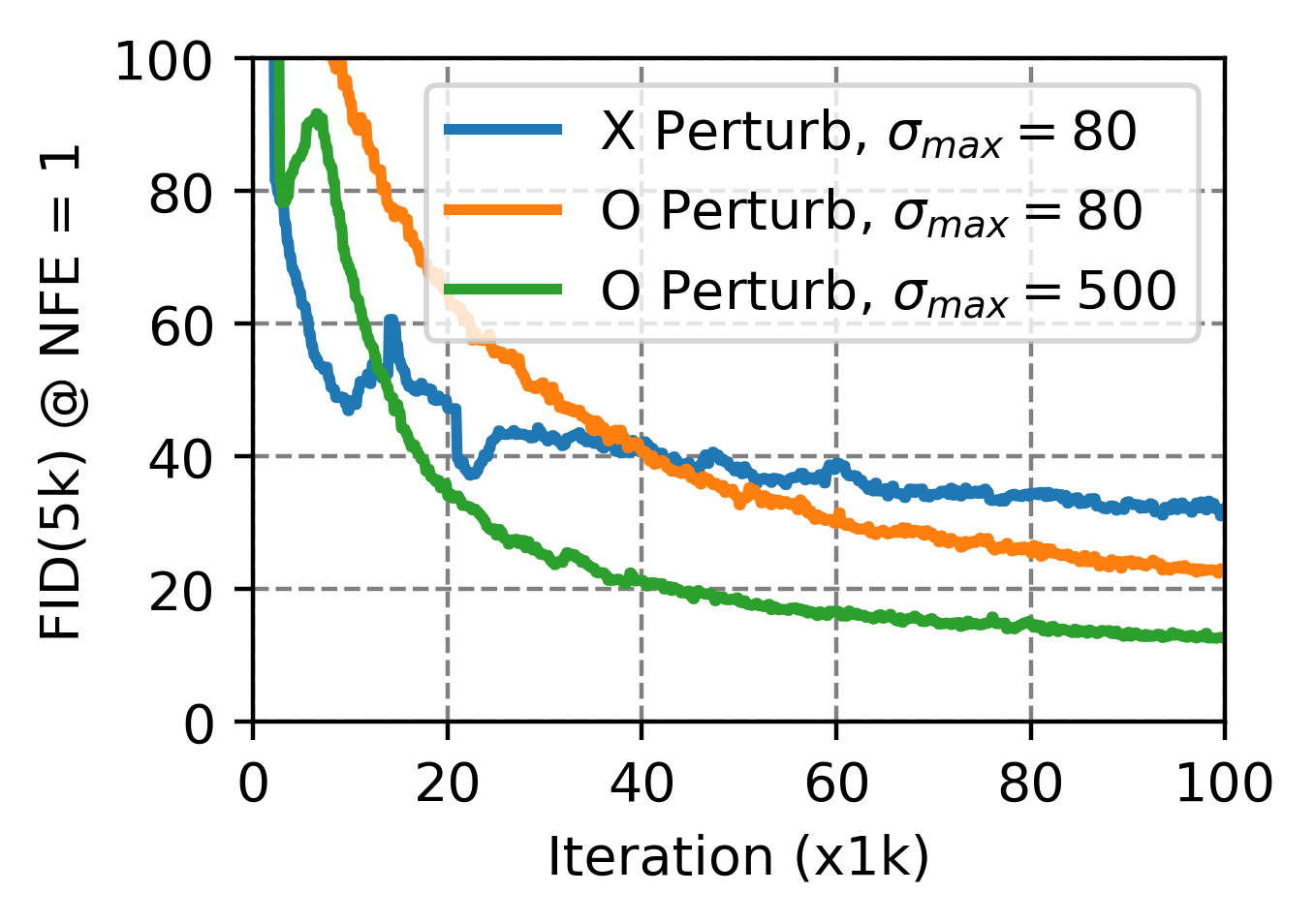}
\caption{Ablation study of GCTM.}
\label{fig:ablation}
\vspace{-0.3cm}
\end{wrapfigure}

We now perform an ablation study on the design choices of Section \ref{sec:design}. We have already illustrated the power of using appropriate couplings in previous sections, so we explore the importance of $\sigma_{\max}$. A robust choice for $\sigma_{\max}$ for unconditional generation is well-known to be $\sigma_{\max} = 80$ \citep{karras2022edm,kim2023consistency}, and we found using this choice to perform sufficiently well for GCTMs when learning to translate noise to data with independent or OT couplings. So, we restrict our attention to image-to-image translation.

In Figure \ref{fig:ablation}, we display the learning curves on Edges$\rightarrow$Shoes for GCTMs trained without and with Gaussian perturbation, and $\sigma_{\max} \in \{80,500\}$. We observe that GCTM trained without perturbation and $\sigma_{\max} = 80$ exhibits unstable dynamics, and is unable to minimize the FID below $30$. On other hand, GCTM trained with perturbation and $\sigma_{\max} = 80$ surpasses the model trained without perturbation. This demonstrates Gaussian perturbation is indeed crucial for one-to-many generation, as noted in the last paragraph of Section \ref{sec:design}. Finally, GCTM with both perturbation and $\sigma_{\max} = 500$ minimizes FID the fastest. This shows high-curvature regions for image-to-image ODEs lie near $\xx_1$, so we need to use a large $\sigma_{\max}$ which places more discretization points near $t = 1$.

\section{Conclusion}

Our work marks a significant advancement in the realm of ODE-based generative models, particularly on the transformative capabilities of Consistency Trajectory Models (CTMs). While the iterative nature of diffusion has proven to be a powerful foundation for high-quality image synthesis and nuanced control, the computational demands associated with numerous neural function evaluations (NFEs) per sample have posed challenges for practical implementation. Our proposal of Generalized CTMs (GCTMs) extends the reach of CTMs by enabling one-step translation between arbitrary distributions, surpassing the limitations of traditional CTMs confined to Gaussian noise to data transformations. Through an insightful exploration of the design space, we elucidate the impact of various components on downstream task performance, providing a comprehensive understanding that contributes to a broadly applicable and stable training scheme. Empirical validation across diverse image manipulation tasks demonstrates the potency of GCTMs, showcasing their ability to accelerate and enhance diffusion-based algorithms. In summary, our work not only contributes to theoretical advancements but also delivers tangible benefits, showcasing GCTMs as a key element in unlocking the full potential of diffusion models for practical, real-world applications in image synthesis, translation, restoration, and editing.



\bibliographystyle{iclr2025_conference}

\newpage
\appendix

\section{Full Experiment Settings} \label{append:settings}

\subsection{Training}

In this section, we introduce training choices which provided reliable performance across all experiments in our paper.

\noindent
\textbf{Bootstrapping scores.} In all our experiments, we train GCTMs without a pre-trained score model. So, analogous to CTMs, we use velocity estimates given by an exponential moving average $\theta_{\EMA}$ of $\theta$ to solve ODEs. We use exponential moving average decay rate $0.999$.

\noindent
\textbf{Time discretization.} In practice, we discretize the unit interval into a finite number of timesteps $\{t_n\}_{n = 0}^N$ where
\begin{align}
t_0 = 0 < t_1 < \cdots < t_N = 1
\end{align}
and learn ODE trajectories integrated with respect to the discretization schedule. EDM \citep{karras2022edm}, which has shown robust performance on a variety of generation tasks, solves the PFODE on the time interval $(\sigma_{\min},\sigma_{\max})$ for $0 < \sigma_{\min} < \sigma_{\max}$ according to the discretization schedule
\begin{align}
\sigma_n = (\sigma_{\min}^{1/\rho} + (n/N) (\sigma_{\max}^{1/\rho} - \sigma_{\min}^{1/\rho}))^\rho
\end{align}
for $n = 0, \ldots, N$ and $\rho = 7$. Thus, using the change of time variable \eqref{eq:change} derived in Theorem \ref{theorem:1}, we convert PFODE EDM schedule to FM ODE discretization
\begin{align}
t_0 = 0, \quad t_n = \sigma_n / (1 + \sigma_n) \quad \text{for} \quad n = 1, \ldots, N-1, \quad t_{N} = 1.
\end{align}
In our experiments, we fix $\sigma_{\min} = 0.002$ and control $\sigma_{\max}$. We note that $\sigma_{\max}$ controls the amount of emphasis on time near $t = 1$, i.e., larger $\sigma_{\max}$ places more time discretization points near $t = 1$.

\noindent
\textbf{Number of discretization steps $N$.} CTMs use fixed $N = 18$. In contrast, analogous to iCMs, we double $N$ every $100k$ iterations, starting from $N = 4$.

\noindent
\textbf{Time $\hat{t}$ distribution.} For unconditional generation, we sample
\begin{align}
\hat{t} = \sigma / (1 + \sigma), \qquad \log \sigma \sim \mathcal{N}(-1.2, 1.2^2)
\end{align}
in accordance with EDM. For image-to-image translation, we sample
\begin{align}
\hat{t} \sim \text{beta}(3,1).
\end{align}


\noindent
\textbf{Network conditioning.} We use the EDM conditioning, following CTMs.

\noindent
\textbf{Distance $d$.} CTMs use $d$ defined as
\begin{align}
d(\xx_t,\hat{\xx}_t) = \LPIPS(G_{\theta_{\EMA}}(\xx_t,t,0),G_{\theta_{\EMA}}(\hat{\xx}_t,t,0))
\end{align}
which compares the perceptual distance of samples projected to time $t = 0$. In contrast, following iCMs, we use the pseudo-huber loss
\begin{align}
d(\xx_t,\hat{\xx}_t) = \sqrt{\|\xx_t - \hat{\xx}_t\|_2^2 + c^2} - c
\end{align}
where $c = 0.00054 \sqrt{d}$, where $d$ is the dimension of $\xx_t$.

\noindent
\textbf{Batch size.} We use batch size $128$ for $32 \times 32$ resolution images and batch size $64$ for $64 \times 64$ resolution images.

\noindent
\textbf{Optimizer.} We use the Adam optimizer \citep{kingma2015adam} with learning rate
\begin{align}
\eta = 0.0002 / (128 / \text{\texttt{batch\_size}})
\end{align}
and default $(\beta_1,\beta_2) = (0.9, 0.999)$.

\noindent
\textbf{Coefficient for $\LL_{\FM}(\theta)$.} We use $\lambda_{\FM} = 0.1$ for all experiments.

\noindent
\textbf{Network.} We modify \texttt{SongUNet} provided at \url{https://github.com/NVlabs/edm} to accept two time conditions $t$ and $s$ by using two time embedding layers.

\noindent
\textbf{ODE Solver.} We use the second order Heun solver to calculate $\LL_{\GCTM}(\theta)$.

\noindent
\textbf{Gaussian perturbation.} We apply a Gaussian perturbation from a normal distribution multiplied by 0.05 to sample $\xx_1$, excluding inpainting task.

\subsection{Evaluation}

In this section, we describe the details of the evaluation to ensure reproducibility of our experiments.

\noindent
\textbf{Datasets.} In unconditional generation task, we compare our GCTM generation performance using CIFAR10 training dataset. In image-to-image translation task, we evaluate the performance of models using test sets of Edges$\rightarrow$Shoes, Night$\rightarrow$Day, Facades from Pix2Pix. In image restoration task, we use FFHQ and apply following corruption operators $\HH$ from I$^2$SB to obtain measurement: bicubic super-resolution with a factor of 2, Gaussian deblurring with $\sigma = 0.8$, and center inpainting with Gaussian. We then assess model performance using test dataset.

\noindent
\textbf{Baselines.} In image-to-image translation task, we compare three baselines: Pix2Pix from \url{https://github.com/junyanz/pytorch-CycleGAN-and-pix2pix}, Palette model from \url{https://github.com/Janspiry/Palette-Image-to-Image-Diffusion-Models}, and I$^2$SB from \url{https://github.com/NVlabs/I2SB}. We modify the image resolution to 64 $\times$ 64 and keep the hyperparameters as described in their code bases, except that Pix2Pix due to the input size constraints of the discriminator. Same configuration is used in supervised image restoration task.

\noindent
\textbf{Metrics details.} We calculate FID using \url{https://github.com/mseitzer/pytorch-fid} and IS from \url{https://github.com/pytorch/vision/blob/main/torchvision/models/inception.py}. We assess LPIPS from \url{https://github.com/richzhang/PerceptualSimilarity} with AlexNet version 0.1. In generation task, we employ the entire training dataset to obtain FID scores, and in the other task, we sample 5,000 test datasets. To obtain PSNR and SSIM, we convert the data type of model output to \texttt{uint8} and normalize it. We use \url{https://github.com/scikit-image/scikit-image} for PSNR and SSIM.

\noindent
\textbf{Sampling time.} To compare inference speed, we measure the average time between the model taking in one batch size as input and outputting it.

\newpage

\section{Algorithms} \label{append:algo}

\subsection{Optimal Transport}
\begin{algorithm}[h]
\caption{Sinkhorn-Knopp (SK)}
\begin{algorithmic}[1]
\State \textbf{Input:} $\{\xx_0^m\}_{m=1}^M$, $\{\xx_1^m\}_{m=1}^M$, $\tau$
\State Compute cost matrix $\bm{C} \in \mathbb{R}^{M \times M}$ such that $C_{i,j} = \|\xx_0^i - \xx_1^j\|_2^2$
\State With Alg. 1 in \citep{cuturi2013sinkhorn}, solve $\bm{P}^{\OT} = \argmin_{\bm{P}} \langle \bm{P}, \bm{C} \rangle - \tau H(\bm{P})$ s.t. $\bm{P} \bm{1} = \bm{P}^\top \bm{1} = \frac{1}{n} \bm{1}$
\State Treat $\bm{P}^{\OT}$ as a discrete distribution over $\{1,\ldots,M\} \times \{1,\ldots,M\}$
\State Sample $\{(i^m,j^m)\}_{m=1}^M \sim \bm{P}^{\OT}$
\State \textbf{Return:} $\{(\xx_0^{i^m},\xx_1^{j^m})\}_{m=1}^M$
\end{algorithmic}
\label{alg:sk}
\end{algorithm}

\subsection{Image Restoration} \label{append:algo-inv}

\begin{algorithm}[h]
\caption{Zero-shot Image Restoration}
\begin{algorithmic}[1]
\State \textbf{Input:} Measurement $\xx_1$, corruption $\HH$, discretization $\{t_i\}_{i=0}^{M}$
\State $\xx_{t_M}' \sim \mathcal{N}(0,\bm{I})$
\For{$i = M$ \textbf{to} 1}
\State $\bm{\epsilon} \sim \mathcal{N}(0,\bm{I})$
\If{Method is \texttt{DPS}} 
\State $\hat{\xx}_0 = g_{\theta}(\xx_{t_i}',t_i,t_i)$
\State $\xx_{t_{i-1}}' = (1 - t_{i-1}) \hat{\xx}_0 + t_{i-1} \bm{\epsilon}$
\ElsIf{Method is \texttt{CM}} 
\State $\hat{\xx}_0 = G_{\theta}(\xx_{t_i}', t_i, 0)$
\State $\xx_{t_{i-1}}' = (1 - t_{i-1}) \hat{\xx}_0 + t_{i-1} \bm{\epsilon}$
\ElsIf{Method is \texttt{GCTM}} 
\State Evaluate score and ODE endpoint in parallel by $t = (t_i,t_i)$, $s = (t_i,0)$ :
\State $\widetilde{\xx}_0, \hat{\xx}_0 = g_{\theta}(\xx_{t_i}',t_i,t_i), G_{\theta}(\xx_{t_i}',t_i,0)$
\State $\xx_{t_{i-1}}' = (1 - t_{i-1}) \widetilde{\xx}_0 + t_{i-1} \bm{\epsilon}$
\EndIf
\State $\xx_{t_{i-1}}' \gets \xx_{t_{i-1}}' - \lambda\nabla_{\xx_{t_i}'}||\xx_1 - \HH \hat{\xx}_0||^2_2$
\EndFor
\State \textbf{Return:} $\xx_0'$
\end{algorithmic}
\label{alg:image_restoration}
\end{algorithm}

In Alg. \ref{alg:image_restoration}, we describe three zero-shot image restoration algorithms, DPS, CM, and GCTM. DPS uses the posterior mean $\EE_{q(\xx_0 | \xx_{t_i}')}[\xx_0]$ to both traverse to a smaller time $t_{i-1}$ and to approximate measurement inconsistency. As the posterior mean generally do not lie in the data domain, using it to calculate measurement inconsistency can be problematic. Indeed, approximation error in DPS is closely related to the discrepancy between the posterior mean and $\xx_{t_i \rightarrow 0}'$ (see Theorem 1 in \citep{chung2022dps} for a formal statement). On the other hand, CM uses the ODE terminal point $\xx_{t_i \rightarrow 0}'$ to traverse to a smaller time $t_{i-1}$ and to approximate measurement inconsistency. While CM can have better guidance gradients as $\xx_{t_i \rightarrow 0}'$ lie within the data domain, using $\xx_{t_i \rightarrow 0}'$ to traverse to $t_{i-1}$ can accumulate truncation error and degrade sample quality. For instance, see Figure 9 (a) in \citep{kim2023consistency}. GCTM mitigates both problems by enabling parallel evaluation of posterior mean and ODE endpoint, as shown in Line 12-13 of Alg. \ref{alg:image_restoration}.

\subsection{Image Editing} \label{append:algo-edit}

\begin{algorithm}[h]
\caption{Image Editing}
\begin{algorithmic}[1]
\State \textbf{Input:} $(\xx_0, \xx_1) \sim q(\xx_0,\xx_1)$, $t$
\State $\hat{\xx}_t = (1-t) \text{Edit}(\xx_0) + t\xx_1$
\State \textbf{Return:} $G_\theta(\hat{\xx}_t,t,0)$
\end{algorithmic}
\label{alg:image_editing}
\end{algorithm}

\section{Proofs}

\subsection{Proof of Theorem \ref{theorem:1}} \label{proof:theorem1}

\begin{proof}
We observe that the velocity term in \eqref{eq:flow_ode} may be expressed as
\begin{align}
\EE_{q(\xx_0, \xx_1 | \xx_t)}[\xx_1 - \xx_0] &= \EE_{q(\xx_0, \xx_1 | \xx_t)}[(\xx_t - \xx_0) / t] \\
&= \EE_{q(\xx_0 | \xx_t)}[(\xx_t - \xx_0) / t] \\
&= (\xx_t - \EE_{q(\xx_0 | \xx_t)}[\xx_0]) / t
\end{align}
since $\xx_1$ is determined given $\xx_0$ and $\xx_t$. This shows the equivalence between \eqref{eq:flow_ode} and \eqref{eq:flow_ode_equiv}. Eqs. \eqref{eq:flow_G_param} and \eqref{eq:flow_g} are straightforward consequences of the equivalence between ODEs.
\end{proof}

\subsection{Proof of Theorem \ref{theorem:2}} \label{proof:theorem2}

\subsubsection{Proof of part (i)}

\begin{proof}
We first show equivalence of scores. We note that
\begin{align}
\xx_t \mapsto \bar{\xx}_{t'}
\end{align}
is a bijective transformation, so by change of variables,
\begin{align}
q(\bar{\xx}_{t'} | \xx_0) = (1 + t) \cdot \mathcal{N}(\xx_t | \xx_0, t \bm{I}) = (1 + t) \cdot p(\xx_t | \xx_0)
\end{align}
and marginalizing out $\xx_0$, we get
\begin{align}
q(\bar{\xx}_{t'}) = (1 + t) \cdot p(\xx_t).
\end{align}
It follows by Bayes' rule that
\begin{align}
p(\xx_0 | \xx_t) &= \frac{p(\xx_t | \xx_0) p(\xx_0)}{p(\xx_t)} \\
&= \frac{(1 + t)^{-1} q(\bar{\xx}_{t'} | \xx_0) q(\xx_0)}{(1 + t)^{-1} q(\bar{\xx}_{t'})} \\
&= \frac{q(\bar{\xx}_{t'} | \xx_0) q(\xx_0)}{q(\bar{\xx}_{t'})} \\
&= q(\xx_0 | \bar{\xx}_{t'})
\end{align}
and thus
\begin{align}
\EE_{p(\xx_0 | \xx_t)}[\xx_0] = \EE_{q(\xx_0 | \bar{\xx}_{t'})}[\xx_0].
\end{align}
for all $t \in (0,\infty)$ and $\xx_t$. We now show equivalence of ODEs. Let us first re-state the diffusion PFODE below.
\begin{align}
d\xx_t = \frac{\xx_t - \EE_{p(\xx_0 | \xx_t)}[\xx_0]}{t} \, dt.
\end{align}
With the change of variable
\begin{align}
\bar{\xx}_t = \xx_t / (1 + t),
\end{align}
we have
\begin{align}
d \bar{\xx}_t &= - \frac{\xx_t}{(1 + t)^2} \, dt + \frac{1}{1 + t} \, d\xx_t \\
&= - \frac{\xx_t}{(1 + t)^2} \, dt + \frac{\xx_t - \EE_{p(\xx_0 | \xx_t)}[\xx_0]}{t(1 + t)} \, dt \\
&= - \frac{\bar{\xx}_t}{1 + t} \, dt + \frac{(1 + t)\bar{\xx}_t - \EE_{p(\xx_0 | \xx_t)}[\xx_0]}{t(1 + t)} \, dt \\
&= \frac{\bar{\xx}_t - \EE_{p(\xx_0 | \xx_t)}[\xx_0]}{t(1 + t)} \, dt \\
&= \frac{\bar{\xx}_t - \EE_{q(\xx_0 | \bar{\xx}_{t'})}[\xx_0]}{t(1 + t)} \, dt
\end{align}
where we have used equivalence of scores at the last line. We then make the change of time variable
\begin{align}
t' = t / (1 + t) \implies dt' = \frac{1}{(1 + t)^2} \, dt
\end{align}
which gives us
\begin{align}
d \bar{\xx}_{t'} &= \frac{\bar{\xx}_{t'} - \EE_{q(\xx_0 | \bar{\xx}_{t'})}[\xx_0]}{t/(1 + t)} \, dt' \\
&= \frac{\bar{\xx}_{t'} - \EE_{q(\xx_0 | \bar{\xx}_{t'})}[\xx_0]}{t'} \, dt'.
\end{align}
This concludes the proof.
\end{proof}

\subsubsection{Proof of part (ii)}

\begin{proof}
For the first equality in \eqref{eq:equality}, transform PFODE variables $(\xx_t,t)$ into FM ODE variables $(\bar{\xx}_{t'},t')$ with \eqref{eq:change}, transport $\bar{\xx}_{t'}$ to $\bar{\xx}_{s'}$ with $G_{\GCTM}$, and then transform FM ODE variables $(\bar{\xx}_{s'},s')$ into PFODE variables $(\xx_s,s)$ with the inverse of \eqref{eq:change}. Second equality in \eqref{eq:equality} follows directly from \eqref{eq:score_equiv}.
\end{proof}


\section{Limitation, Social Impacts, and Reproducibility}

\noindent
\textbf{Limitations.} GCTMs are yet unable to reach state-of-the-art unconditional generative performance. We speculate further tuning of hyper-parameters in the manner of iCMs could improve the performance, and leave this for future work.

\noindent
\textbf{Social impacts.} GCTM generalizes CTM to achieve fast translation between any two distributions. Hence, GCTM may be used for beneficial purposes, such as fast medical image restoration. However, GCTM may also be used for malicious purposes, such as generation of malicious images, and this must be regulated.

\noindent
\textbf{Reproducibility.} We will open-source our code upon acceptance. Moreover, we have submitted experiment codes as a supplementary material.

\section{Additional Experiments} \label{append:exp}

\subsection{Comparing I2I performance with other baseline models}

We compare the image-to-image (I2I) performance of our model with two baseline approaches: EGSDE~\citep{zhao2022egsde} and BBDM~\citep{li2023bbdm}. Since BBDM, an I2I framework based on the Brownian Bridge process, leverages a latent diffusion model, we train it with a pixel-space diffusion model for a fair comparison. Both BBDM and EGSDE are trained on the Edges$\rightarrow$Shoes dataset. As shown in Table \ref{tab:edit_base}, our GCTM outperforms all baselines across various metrics, even when evaluated with fewer sampling steps.

In addition, we visualize the image editing results in Fig. \ref{fig:editing_compare}. While EGSDE generates realistic images, it fails to faithfully preserve the given conditions. BBDM, on the other hand, struggles to perform robustly on (unseen) conditional images. In contrast, GCTM produces realistic images while accurately maintaining the original conditions.

\begin{table}[!b]
\caption{Evaluation of I2I translation on Edges$\rightarrow$Shoes with other baslines.}
\centering
\begin{tabular}{cccccc}
\toprule
\textbf{Method} & \textbf{NFE} & \textbf{Time (ms)} & \textbf{FID} $\downarrow$ & \textbf{IS} $\uparrow$ & \textbf{LPIPS} $\downarrow$ \\
\cmidrule{1-6}
BBDM~\citep{li2023bbdm} & 5 & 75 & 43.7 & 3.43 & 0.099 \\ 
EGSDE~\citep{zhao2022egsde} & 500 & 2590 & 198.1 & 2.87 & 0.476 \\
GCTM & 1 & 87 & \textbf{40.3} & \textbf{3.54} & \textbf{0.097} \\
\bottomrule
\label{tab:edit_base}
\end{tabular}
\end{table}

\begin{figure}[t]
    \centering
    \includegraphics[width=\linewidth]{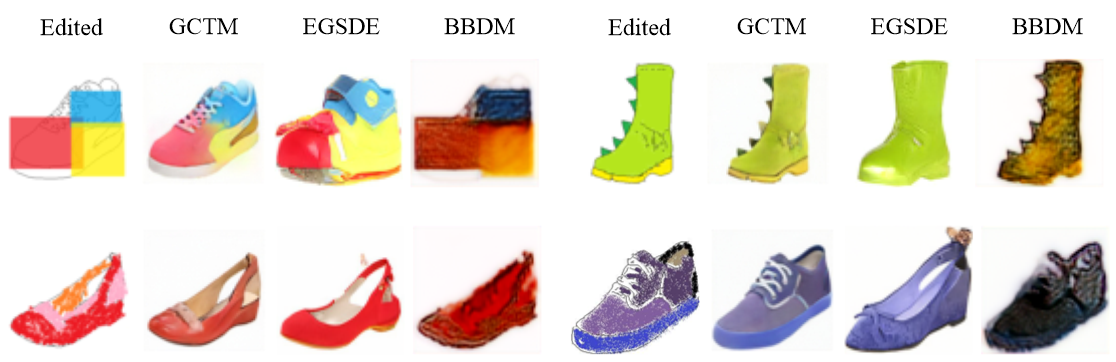}
    \caption{Comparison on image editing with GCTM and other baselines}
    \label{fig:editing_compare}
\end{figure}

\subsection{Controllable Image Editing}

In this section, we demonstrate that effectiveness of image editing can be controlled. In Algorithm \ref{alg:image_editing}, we control the time point $t$ to determine how much of the edited image to reflect. In Fig. \ref{fig:editing_time}, the results visualize how $t$ effect the output of model output. We observe that the larger $t$, the more realistic the image, and the smaller $t$, the more faithful the edit feature. We set $t = 0.95$ and $t = 0.4$ at supervised coupling and independent coupling, respectively. 

\begin{figure}[!ht]
    \centering
    \includegraphics[width =\textwidth]{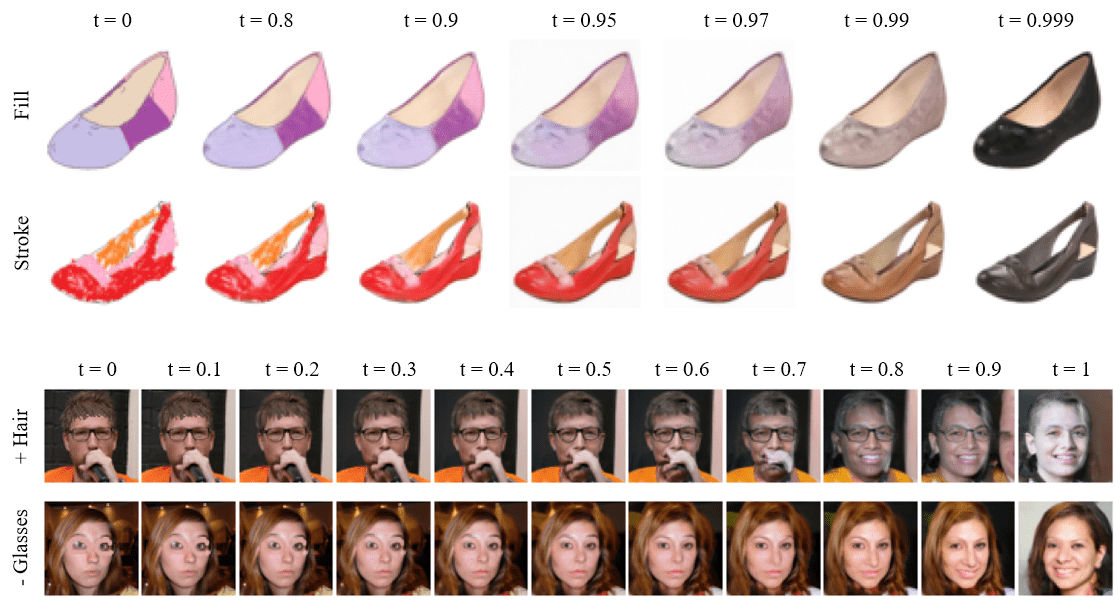}
    \caption{Controllability of image editing by $t$.}
    \label{fig:editing_time}
\end{figure}

\subsection{High-resolution Image restoration}

In Table \ref{table:gctm_256}, we demonstrate image restoration task of GCTM on ImageNet with higher resolution. 

\begin{table}[!h]
\caption{GCTM evaluation of image restoration on ImageNet with 256 $\times$ 256 resolution.}
\vspace{-0.3cm}
\begin{center}
\renewcommand{\arraystretch}{1}
\resizebox{\textwidth}{!}
{\small
\begin{tabular}{ccccccccccccc}
\toprule
 \multirow{2}{*}{Method} & \multirow{2}{*}{NFE}
 &\multicolumn{3}{c}{SR4 - Bicubic} &  \multicolumn{3}{c}{Deblur - Gaussian} & \multicolumn{3}{c}{Inpaint - Center}
  \\ \cmidrule(lr){3-5} \cmidrule(lr){6-8} \cmidrule(lr){9-11}
 \multicolumn{1}{c}{} & \multicolumn{1}{c}{} &
  \multicolumn{1}{c}{PSNR $\uparrow$} &
  \multicolumn{1}{c}{SSIM $\uparrow$ } &
  \multicolumn{1}{c}{LPIPS $\downarrow$ } &
  \multicolumn{1}{c}{PSNR $\uparrow$} &
  \multicolumn{1}{c}{SSIM $\uparrow$ } &
  \multicolumn{1}{c}{LPIPS $\downarrow$ } &
  \multicolumn{1}{c}{PSNR $\uparrow$} &
  \multicolumn{1}{c}{SSIM $\uparrow$ } &
  \multicolumn{1}{c}{LPIPS $\downarrow$ } \\
\cmidrule{1-11}
 Corrupt & - & 24.48& 0.708& 0.340 & 25.26& 0.830& 0.223& 12.92& 0.708& 0.598\\ 
 GCTM & 1 & {26.70} & {0.771} & {0.223} & {34.65} & {0.948} & {0.032} & {21.56} & {0.808} & {0.229} \\
\bottomrule
\end{tabular}
}
\label{table:gctm_256}
\end{center}
\end{table}

\newpage

\subsection{Additional Image-to-Image Translation Samples}

\begin{figure}[h!]
\centering
\small{
\includegraphics[width = \textwidth]{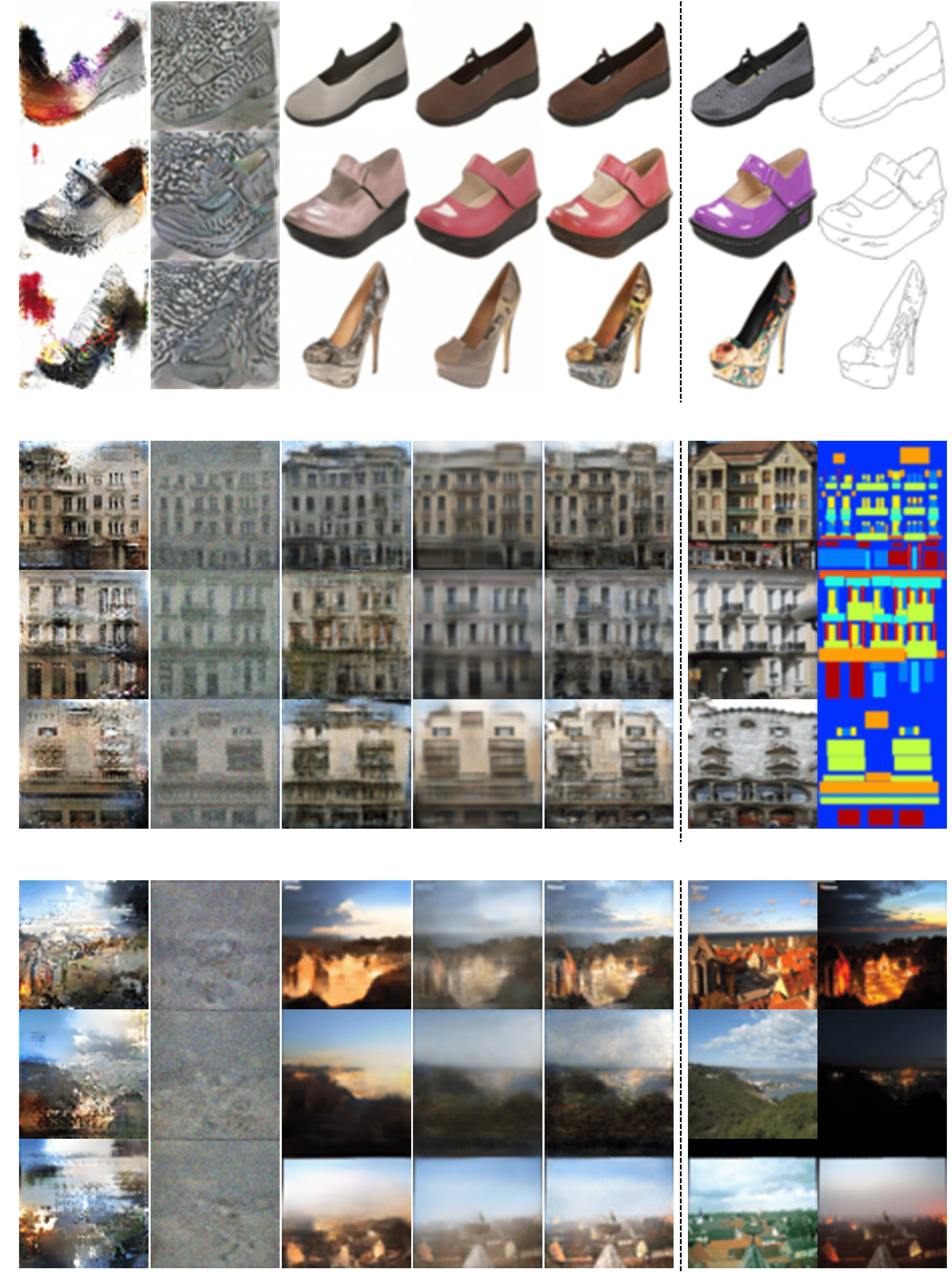}
\put(-376,-7){Pix2Pix}
\put(-320,-7){Palette}
\put(-263,-7){\iisb }
\put(-218,-7){Regression}
\put(-157,-7){GCTM}
\put(-87,-7){$\xx_0$}
\put(-35,-7){$\xx_1$}
}
\caption{Additional results on image-to-image translation task on Edges$\rightarrow$Shoes (top), Night$\rightarrow$Day (middle) and Facades (bottom).}
\label{fig:app_i2i}
\end{figure}

\subsection{Additional Image Restoration Samples}

\begin{figure}[h!]
    \centering
    \includegraphics[width=\textwidth]{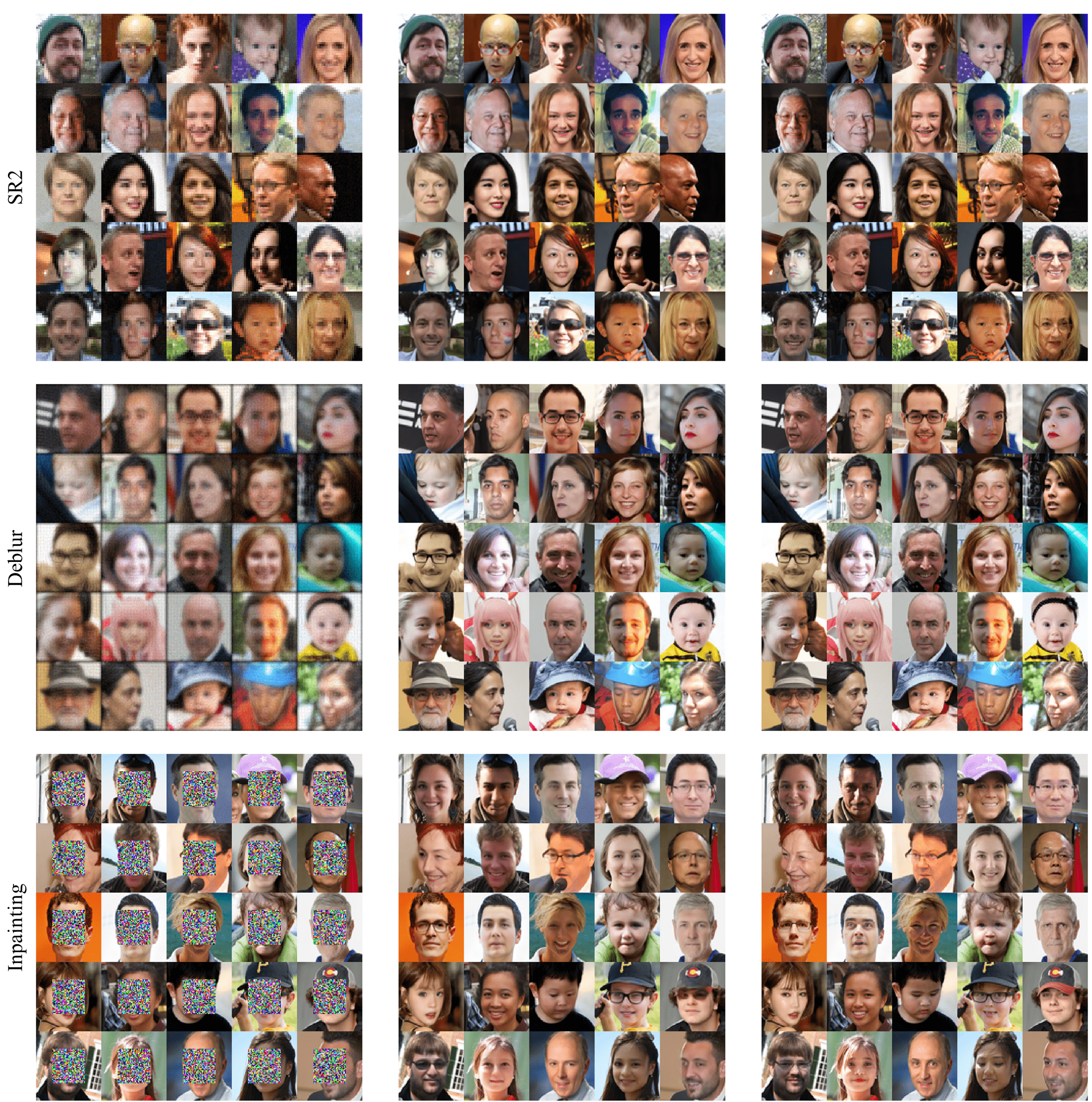}
    \put(-330, -10){$\xx_1$}
    \put(-209, -10){GCTM}
    \put(-65, -10){$\xx_0$}
    \caption{Additional results of supervised image restoration task on FFHQ 64$\times$64.}
    \label{fig:app+i2i2}
\end{figure}

\begin{figure}
    \centering
    \includegraphics[width=\textwidth]{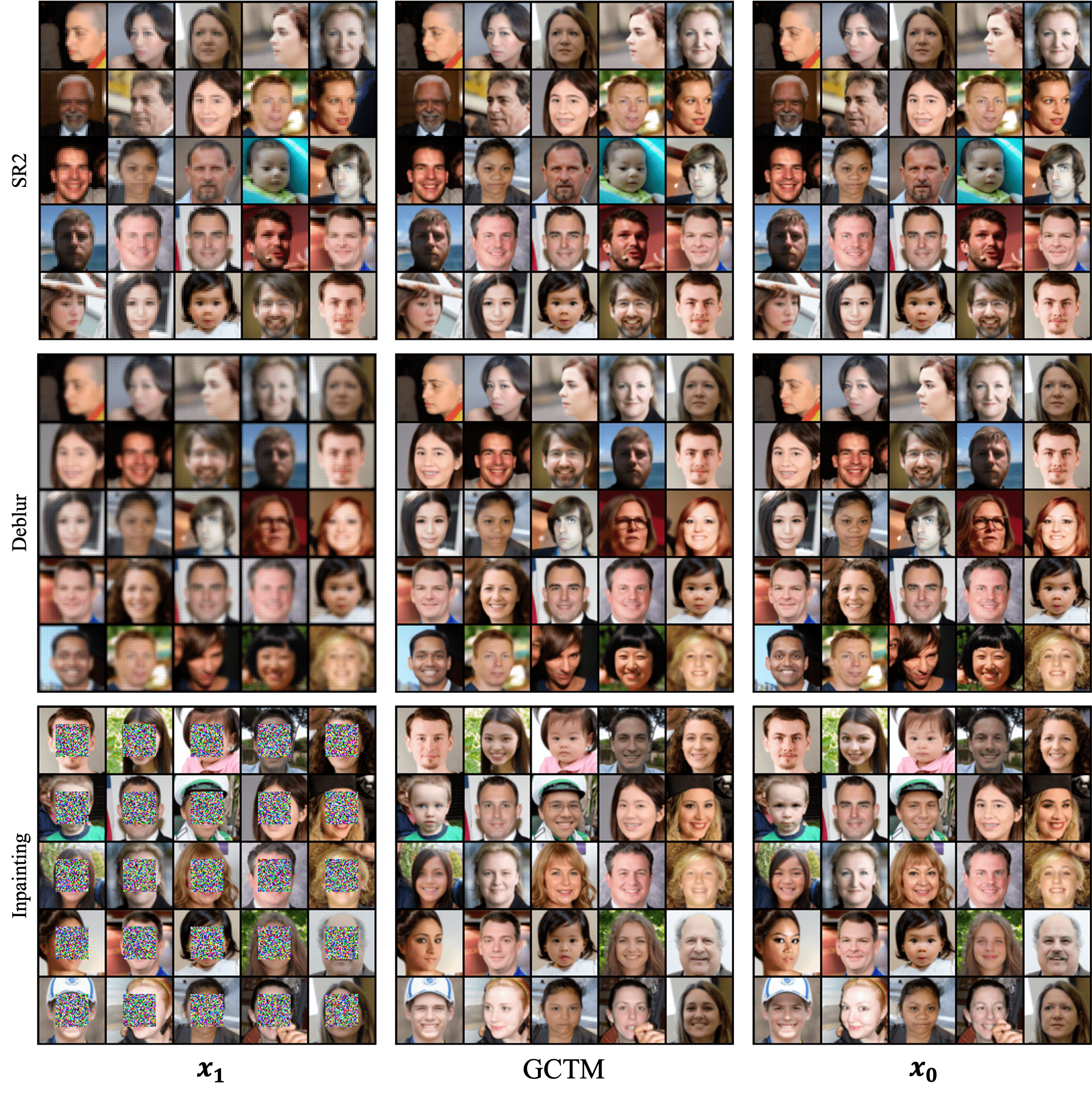}
    \caption{Additional results of zero-shot image restoration task on FFHQ 64$\times$64.}
    \label{fig:zeroshot}
\end{figure}

\begin{figure}
    \centering
    \includegraphics[width=\textwidth]{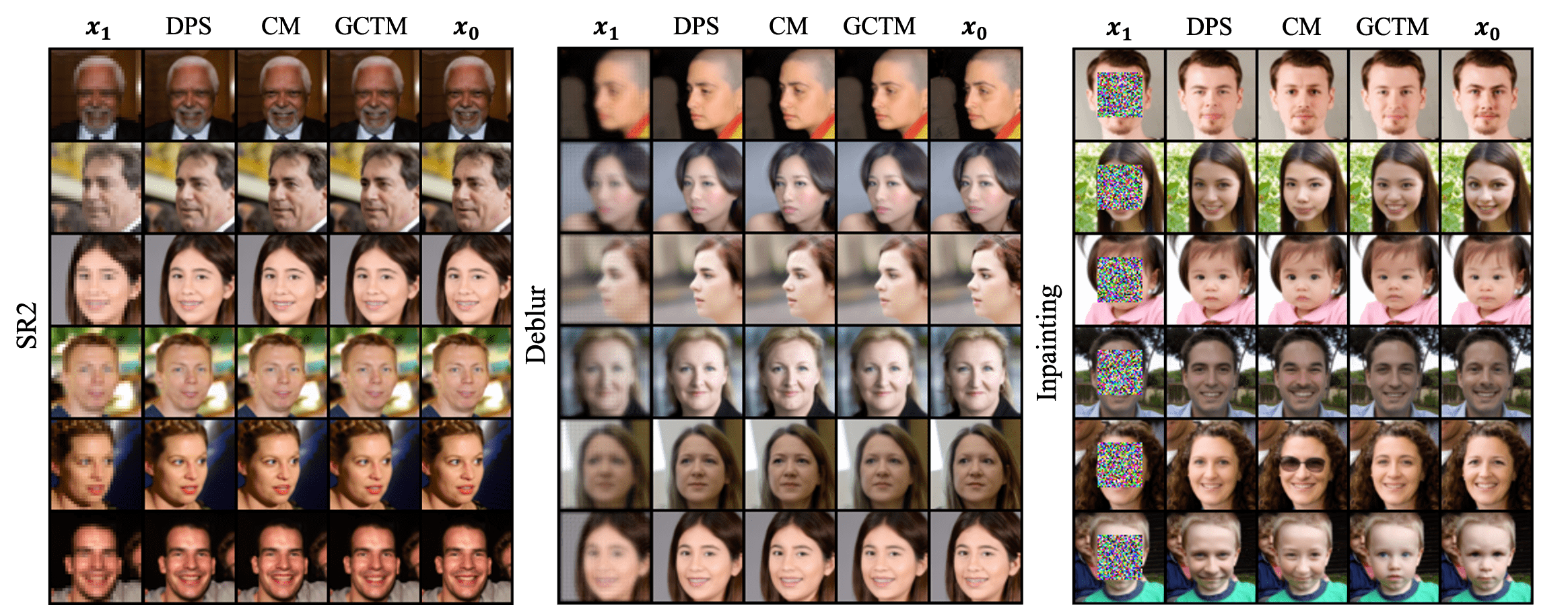}
    \caption{Qualitative comparison of zero-shot algorithms on FFHQ 64$\times$64.}
    \label{fig:zeroshot_compare}
\end{figure}

\end{document}